  \providecommand\BibTeX{{%
    \normalfont B\kern-0.5em{\scshape i\kern-0.25em b}\kern-0.8em\TeX}}}
\tikzstyle{vertex}=[draw,fill=black!15,minimum size=20pt,inner sep=0pt]
\def\ouralgo{{\text{WEQ}}}
\def\R{{\mathbb R}}
\def\P{{\mathbb P}}
\def\inf{{M}}
\def\rank{{\text{sr}}}
\def\negs{{\kappa}}
\begin{document}
\fancyhead{}
\title{Fast Extraction of Word Embedding from Q-contexts}


\author{Junsheng Kong}
\authornote{Equal contribution. Work was done during internship at
Tencent Quantum Lab.}
\authornote{Also with The Key Laboratory of Big Data and Intelligent
Robot (South China University of Technology), Ministry of Education.}
 \affiliation{
   \institution{School of Software Engineering, South China University of Technology}
  \country{China}
  }
\email{sescut_kongjunsheng@mail.scut.edu.cn	}
  
 \author{Weizhao Li}
 \authornotemark[1]
  \authornotemark[2]
 \affiliation{
   \institution{School of Software Engineering, South China University of Technology}
  \country{China}
  }
 \email{se_weizhao.li@mail.scut.edu.cn}
 
 \author{Zeyi Liu}
 \affiliation{
   \institution{University of Cambridge}
  \country{United Kingdom	}
  }
  \email{zl411@cam.ac.uk}
  
 \author{Ben Liao}
  \authornote{Corresponding Author.}
 \affiliation{
   \institution{Tencent Quantum Lab}
  \country{China}
  }
  \email{bliao@tencent.com}
  
\author{Jiezhong Qiu}
 \affiliation{
   \institution{Tsinghua University}
  \country{China}
  }
  \email{qiujz16@mails.tsinghua.edu.cn}
 
 \author{Chang-Yu Hsieh}
 \affiliation{
   \institution{Tencent Quantum Lab}
  \country{China}
  }
\email{kimhsieh@tencent.com}

 \author{Yi Cai}
  \authornotemark[2]
  \authornotemark[3]
 \affiliation{
   \institution{School of Software Engineering, South China University of Technology}
  \country{China}
  }
  \email{ycai@scut.edu.cn}
  
\author{Shengyu Zhang}
\authornotemark[3]
 \affiliation{
   \institution{Tencent Quantum Lab}
  \country{China}
  }
  \email{shengyzhang@tencent.com}

\renewcommand{\shortauthors}{Junsheng Kong, et al.}

\begin{abstract}
  The notion of word embedding plays a fundamental role in natural language processing (NLP). However, pre-training word embedding for very large-scale vocabulary is computationally challenging for most existing methods. In this work, we show that with merely {\it a small fraction of contexts (Q-contexts)} which are {\it typical} in the whole corpus (and their mutual information with words), one can construct high-quality word embedding with negligible errors.
  Mutual information between contexts and words can be encoded canonically as a sampling state, thus, Q-contexts can be fast constructed.
  Furthermore, we present an efficient and effective $\ouralgo$~method, which is capable of extracting word embedding {\it directly} from these typical contexts. In practical scenarios, our algorithm runs 11$\sim$ 13 times faster than well-established methods. By comparing with well-known methods such as matrix factorization, word2vec, GloVe and fasttext, we demonstrate that our method achieves comparable performance on a variety of downstream NLP tasks, and in the meanwhile maintains run-time and resource advantages over all these baselines.
\end{abstract}

\begin{CCSXML}
<ccs2012>
   <concept>
       <concept_id>10010147.10010178.10010179.10010184</concept_id>
       <concept_desc>Computing methodologies~Lexical semantics</concept_desc>
       <concept_significance>500</concept_significance>
       </concept>
 </ccs2012>
\end{CCSXML}

\ccsdesc[500]{Computing methodologies~Lexical semantics}

\keywords{word embedding, Q-contexts, fast extraction, large-scale}

\maketitle

\section{Introduction}
Word embedding plays a fundamental role in the development and real-world applications of natural language processing (NLP). It efficiently provides meaningful representations of individual words in a continuous space, allowing smooth integration with machine learning models in various downstream NLP tasks~\citep{maruf-haffari-2018-document, khabiri2019industry}. The notion of word embedding is also the predecessor of follow-up
deep contextualization models,
including the recently discovered powerful pre-trained contextual embedding models such as ELMo~\cite{Elmo} and BERT~\cite{devlin2019bert}.

High-quality embedding of words can help boost the performance of many machine learning models in NLP tasks.
Recent work about word embedding can be categorized into two genres, i.e., neural network based methods~\cite{DBLP:conf/nips/MikolovSCCD13, fasttext,DBLP:conf/emnlp/PenningtonSM14} and global matrix factorization based methods~\cite{deerwester1990indexing,NIPS2014_feab05aa,arora}. Word2Vec, GloVe and fasttext are the most popular neural network based methods.
Most of the existing methods focus on improving the performance of word embedding.
However, it is computationally expensive to obtain such word embedding --- it takes several days and, typically, around a hundred CPU cores to attain decent quality representation of words
\cite{DBLP:conf/nips/MikolovSCCD13, Mnih, DBLP:conf/emnlp/PenningtonSM14, mikolov13}.
Global matrix factorization based methods for generating word embedding have roots stretching as far back as LSA\cite{deerwester1990indexing}. These methods utilize low-rank approximations to decompose large  matrices  that capture statistical information about a corpus.
Previous work has shown that both the word2vec and GloVe methods can be viewed as implicit factorization of special information matrices\cite{NIPS2014_feab05aa, arora}.
Although global matrix factorization based method is more efficient than the neural network based methods, it still needs to factorize a large $n \times n$ information matrix for large-scale vocabulary, where $n$ is the size of vocabulary.
This makes it highly expensive to directly factorize and calculate for large-scale word embedding learning.

To address the efﬁciency limitations of current work, we propose to study word embedding learning for large-scale vocabulary with the goal of efficiency and theoretical guarantees.
Recent literature has shown that quantum perspective can thus provide advantages for classical machine learning~\cite{rebentrost2014quantum, zeng2016quantum}. Coecke et al. have previously demonstrated a potential quantum advantage for NLP in various ways including by algorithmic speed-up for search-related or classification tasks~\citep{coecke2020foundations}. 
By mimicking how word-meanings are encoded in quantum states, we design our algorithms implemented on classical computers to speed up the word embedding learning problem.
The main idea is to construct a small and typical information matrix which is a good approximation of the original information matrix. 
Both the construction and the factorization of the small matrix require a low cost.
With this design, we are able to demonstrate running-time supremacy for solving a large-scale word embedding problem and maintain accuracy for various downstream NLP tasks.

We reveal a simple relation between a word vector $e_w$ for a target word $w$ and what we call \textit{Q-contexts}.
Q-contexts are a small fraction of contexts that are typical in the whole corpus, capable of capturing the most important information encoded in the information matrix $M$ -- they are certain rows of $M$ chosen to represent the original information matrix. 
The word vector of $w$ is shown to be a combination of its interaction with these contextual environments
\begin{equation}\label{main-eq}
e_w\approx\sum_{c\in \text{ Q-contexts}} \lambda_c M_{c,w}    
\end{equation}
where $\inf_{c,w}$ is the entry in the information matrix $M$ indexed by $c$ and $w$, $\lambda_c$ is a constant vector for a context $c$ to be determined
(for a more detailed description see Section~\ref{Definition}).
Information matrix $M$ can be naturally encoded as a sampling state, enabling a fast construction of Q-contexts.
To the best of our knowledge, it has not been studied to extract meaningful word embedding from its mutual information with a few contexts.

Based on these, we develop a \ouralgo~method that substantially accelerates the word embedding learning
--- in fact, our method is at least $11\sim 13$ times faster than well-established methods, and has fewer resource requirements on CC corpus.
We show empirically that \ouralgo~achieves comparable performance in comparison to well-known methods, such as the direct matrix factorization, word2vec, GloVe and fasttext \cite{DBLP:conf/nips/MikolovSCCD13,DBLP:conf/emnlp/PenningtonSM14, NIPS2014_feab05aa, fasttext} and maintains high accuracy in various downstream NLP tasks.
Further, \ouralgo’s efficiency and effectiveness are theoretically backed up. The small Q-contexts matrix is a good approximation of the original information matrix with negligible error, maintaining the representation power of its learned embedding.

The organization of the rest of this article is as follows. In Section~\ref{PRELIMINARIES-section}, 
we recall a general matrix factorization perspective on well-known word embedding methods. In Section~\ref{method-section}, we introduce the $\ouralgo$~method. In Section~\ref{proof-section}, we analyze the approximation error of the Q-contexts with theoretical proof. In Section~\ref{experiments-section}, we conduct a comprehensive set of experiments demonstrating the accuracy and efficiency of our method. In Section~\ref{related word-section}, we review the related work of word embedding. Finally, we give a conclusion in Section~\ref{sec:conclusion}.

\begin{table}[htbp]
\caption{Notation.}
\centering
\setlength{\tabcolsep}{1.8mm}{
\begin{tabular}{c|l}
\toprule
\textbf{Notation}                & \textbf{Description}                \\ \midrule
$P$            & the multiset of context-word pairs           \\
$w$            & the target word          \\
$c$            & context word around the target word          \\
$\#(c,w)$            & the number of co-occurrences of $c$ and $w$ in $P$          \\
$\#(c)$            & the number of times of $c$ appears in $P$          \\
$\#(w)$            & the number of times of $w$ appears in $P$          \\
$|P|$            & $\sum_c \sum_w \#(c, w)$           \\
$\inf_{c,w}$            & entry in information matrix indexed by $c$ and $w$         \\
$e_w$            & word (row) vector for the target word $w$          \\
$E_w$            & $E_w=\begin{pmatrix}e_{w_1}\\\vdots\\e_{w_n}\end{pmatrix}$          \\
$e_c$            & context (row) vector for context $c$          \\
$\|\cdot \|$            & the $\ell^2$-norm of a vector           \\ 
$A$                & matrix            \\ 
$\|A\|_F$             & the Hilbert-Schmidt norm of $A$          \\
$\|A \|_{op}$            & the operator norm of  matrix $A$           \\ 
$A_{i,*}$            & row $i$ of $A$          \\
$A_{*,j}$            & column $j$ of $A$          \\
$A^\top$            & the transpose of $A$          \\
$\text{nnz}(A)$            & the number of nonzeros in $A$          \\
$R$            & Q-contexts matrix          \\
$\tilde R$            & the normalized version of $R$         \\
\bottomrule
\end{tabular}
}
\label{tab:notation}
\end{table}

\section{PRELIMINARIES}
\label{PRELIMINARIES-section}

Commonly, the problem of word embedding is learned by capturing the semantic relationship between word-context pairs $(w,c)$. For a target word $w$, its context word c is obtained from the neighborhood centering around the locations where $w$ appears in the corpora. 
Previously established results show that the factorization of information matrices provides a united framework for many important existing word embedding algorithms, including word2vec, GloVe, PMI, and NCE \cite{arora,DBLP:conf/emnlp/PenningtonSM14,Mnih,NIPS2014_feab05aa}.
The main difference between these methods lies in different choices of mutual information matrix $M_{c,w}$ between contexts and words.

As indicated in 
\citep{church,arora,dagan,turney},
 factorizing the following point-wise mutual information matrix (PMI) yields effective word representations $M_{c,w}=\log \frac{\#(c,w)|P|}{\#(c)\#(w)}$, where $P$ is the multiset of context-word pairs, \#(c,w) is the number of co-occurrences of context word $c$ and target word $w$ in the $P$, $\#(c)$ and $\#(w)$ are the number of times $c$ and $w$ appear in $P$ respectively. 
It is shown in \cite{NIPS2014_feab05aa} that 
word vectors from word2vec can be obtained from factorization of a shifted version of PMI:
$M_{c,w}=\log \frac{\#(c,w)|P|}{\#(c)\#(w)\cdot \negs},$
where $\negs$ denotes the number of negative samples. They also show that NCE model \cite{Mnih}
is in fact factorizing $M_{c,w}=\log \frac{\#(c,w)}{\#(c)\cdot \negs}.$
To improve performance, a positive version of PMI (PPMI)
$M_{c,w}~=~\log_+\frac{\#(c,w)|P|}{\#(c)\#(w)} $
and a shifted version of PPMI (SPPMI with shift parameter $\negs$) 
$M_{c,w}~=~\log_+\frac{\#(c,w)|P|}{\#(c)\#(w)\cdot \negs}$
are proposed,
where $\log_+(x)~=~\max(\log x,0).$

It is shown \cite{arora} that GloVe objective is in fact optimizing (modulo some error term)
\begin{equation*}
    \sum \#(c,w)\Big (\log \#(c,w)-e_{c}\cdot e_w-\|e_{c}\|^2-\|e_w\|^2\Big )^2 ,
\end{equation*}
where $e_c$ is the context vector for context $c$ and $e_w$ is the word vector for the target word $w$.
In their theory, the authors also show that for some constant $Z$:
\begin{align*}
    \log p(c,w)&\approx\|e_{c}+e_w\|^2/2d-2\log Z \\
    \log p(w)&\approx\|e_w\|^2/2d-\log Z .
\end{align*}
Since $p(w)\approx \frac{\#(w)}{|P|}$, we conclude 
$$e_{c}\cdot e_w\approx \log \left[ (|P|Z)^{4d}\cdot \frac{\#(c,w)}{(\# (c)\#(w))^{2d}}\right].$$

Factorization of mutual information matrices constitutes a unified framework of these word embedding algorithms: $M_{c,w}~=~e_c\cdot e_w$. We list necessary notations and their descriptions in Table~\ref{tab:notation}.

\section{Method}
\label{method-section}

In this section, we present $\ouralgo$ method which is an efficient and effective method for large-scale word embedding learning problem. We develop the $\ouralgo$ method to construct and factorize a small typical information matrix that approximates the original information matrix. The $\ouralgo$ method is composed of three steps, as illustrated in Fig.~\ref{fig:method}. 
First, it calculates the information matrix $M$ from co-occurrence matrix $X$. Secondly, it constructs the small typical information matrix (Q-contexts) from the original information matrix through $\ell^2$-norm sampling. Third, it conducts the singular value decomposition of Q-contexts matrix to obtain the word embedding.

\subsection{Q-contexts Definition}
\label{Definition}
We first introduce the $\ell^2$-norm sampling, then describe the definition of Q-contexts.

\textbf{$\ell^2$-norm sampling}: The $\ell^2$-norm sampling technique has well exhibited its effectiveness in machine learning \citep{DBLP:conf/nips/HazanKS11, DBLP:conf/nips/SongWZ16} and randomized linear algebra \cite{DBLP:journals/siammax/DrineasMM08}. In fact, the work by Frieze, Kannan, and Vempala \cite{fkv} shows that with certain $\ell^2$-norm sampling assumptions, a form of singular value estimation can be achieved in time independent of the size of input matrix.
Further, the work by Tang~\cite{Ewin} shows that sampling from the projection of a vector onto a subspace is not outside the realm of feasibility. 

Inspired by these, we leverage the $\ell^2$-norm sampling to construct a small typical information matrix to solve the large-scale word embedding learning problem.
We now elaborate on our proposed relation between Q-contexts and words in Equation~\eqref{main-eq}.

Given an information matrix $M$, we first encode the information matrix $M$ into an $\ell^2$-norm state which will be described in detail in Section~\ref{WEQ-section}.
Now that the mutual information $M$ is prepared to be a state, each $\ell^2$-norm sampling yields a context $c_i$ with probability $p_{c_i}$.
The collection of the corresponding row vectors $r_{c_i} = M_{c_i,*}$ is what we call Q-contexts.
In our proposed scheme, $p_c$ is designed to correctly reflect the amount of information carried by the context $c$, so that contexts with more information are more likely to be sampled.

\begin{figure}[t]
    \begin{center}
    \includegraphics[scale=0.3]{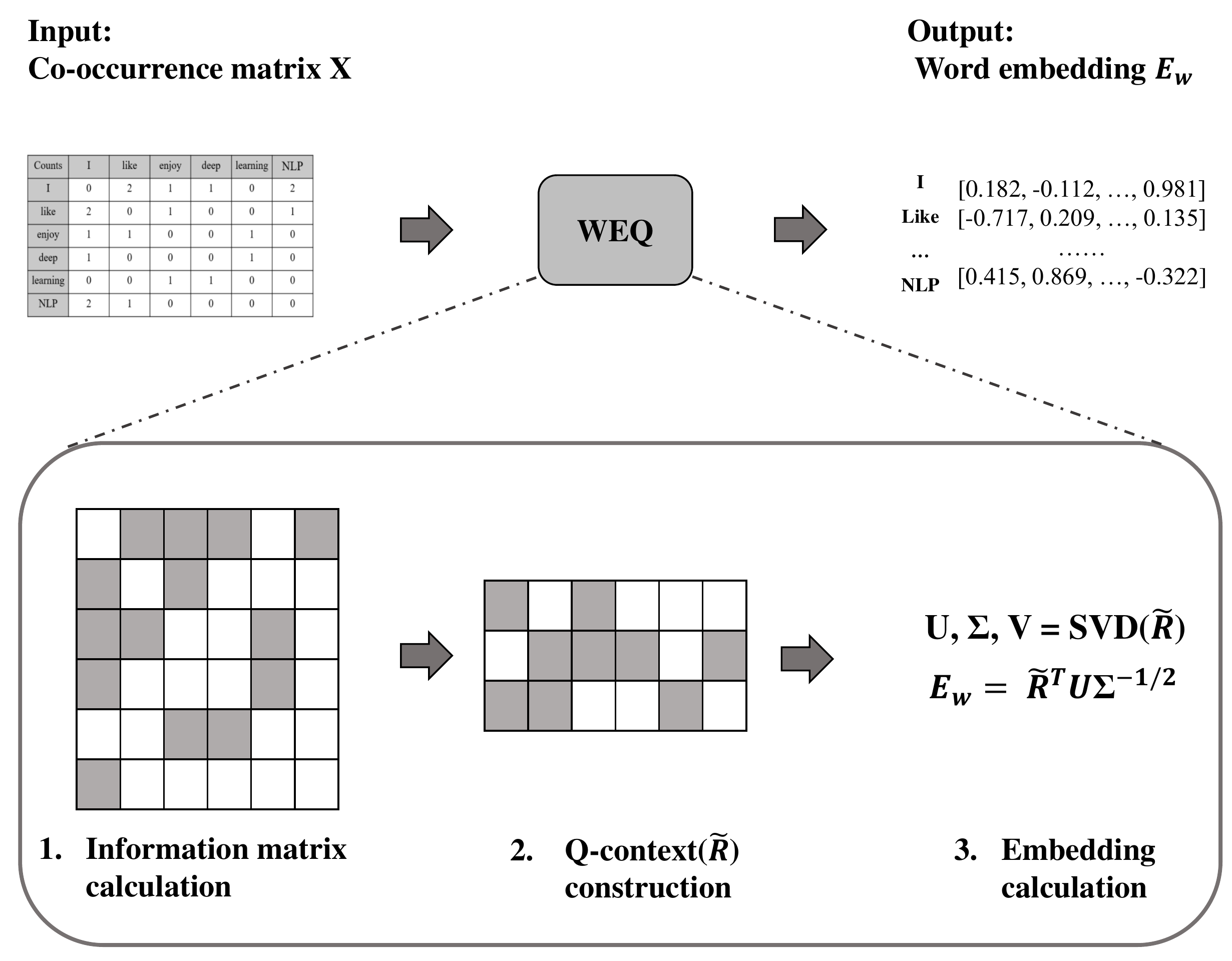}
    \caption{\ouralgo~method. }
    \label{fig:method}
    \end{center}
\end{figure}

\begin{definition}
A Q-contexts matrix $R\in\R^{k\times n}$ is the collection of $k$ rows $r_{c_i}$ in the information matrix $M$:
$$R~=~\begin{pmatrix}r_{c_1}\\\vdots\\r_{c_k}\end{pmatrix}$$
where context $c_i$ is the $i$-th sampling outcome from the information matrix $M$.
\end{definition}

The central idea in Equation~\eqref{main-eq} is that a few $\ell^2$-norm sampling of the information matrix provides sufficient information 
such that a good
word embedding can be obtained from a linear combination of its mutual information with Q-contexts
$$e_w\approx\sum_{c \text{ runs over row indices in } R} r_{c,w}\lambda_c$$
where 
the \textit{context coefficients vectors} (i.e., $\lambda_c$'s) are to be determined in Section~\ref{WEQ-section}.

\subsection{Word Embedding from Q-contexts ($\ouralgo$)}\label{WEQ-section}
We now give the full algorithm (Algorithm~\ref{main-algo}) for computing word embedding based on Q-contexts matrix $R$. {\ouralgo} method consists of three steps: information matrix calculation, Q-contexts construction, and calculating word embedding from Q-contexts.

\textbf{Step 1: Information matrix calculation.}
As we have discussed in Section~\ref{PRELIMINARIES-section}, there are different kinds of information matrices. Here, we choose PPMI and SPPMI matrices proposed in  \cite{NIPS2014_feab05aa} which shows that exact factorizing (PPMI / SPPMI) matrix with SVD is at least as good as SGNS’s solutions. Given a co-occurrence matrix, we construct the PPMI information matrix as follows:
$$M_{c,w}(PPMI)~=~\log_+\frac{\#(c,w)|P|}{\#(c)\#(w)} $$
and a shifted version of PPMI (SPPMI with shift parameter $\negs$) 
$$M_{c,w}(SPPMI)~=~\log_+\frac{\#(c,w)|P|}{\#(c)\#(w)\cdot \negs}.$$

\textbf{Step 2: Q-contexts construction.}
As in Definition 3.1, we will encode the information matrix $M$ into an $\ell^2$-norm state, which admits our fast construction of Q-contexts matrix $R$. 

In our method, we propose a data structure that achieves fast $\ell^2$-norm sampling in practice. 
For the information matrix $M\in \mathbb{R}^{n \times n}$, we store the cumulative summation of its row-squared $(\|r_1\|^2, \|r_1\|^2+\|r_2\|^2, \cdots, \|M\|_F^2)$. To perform the  $\ell^2$-norm sampling, we first generate a random number $a$ from the uniform distribution $\mathcal{U}(0,\|M\|_F^2)$ and perform an efficient binary search algorithm to find the leftmost index such that $a$ is less than or equal to the corresponding cumulative sum. Through repeating the $\ell^2$-norm sampling for $k$ times, we get the small Q-contexts $R$. We normalize each row to get $\tilde R$. 

In addition, we can use column sampling to reduce the matrix $\tilde R$ again.
Applying
the theorem and similar analysis to $\tilde R^\top$, we get a smaller matrix $\tilde C$ for which with high probability
$$\tilde R\tilde R^\top \approx \tilde C \tilde C^\top.$$
Then
$$U\Sigma^2U^\top \approx \tilde R\tilde R^\top \approx \tilde C \tilde C^\top.$$
We obtain good approximations of the right singular vectors $U$ and singular values $\Sigma$ of $\tilde R$, by simply doing the same calculations for the much smaller matrix $\tilde C$. 

\begin{algorithm}[h]
    \SetAlgoLined
   \caption{Q-contexts construction}
   \label{q-contexts-algo}
    \LinesNumbered
    \SetKwInOut{Input}{input}
    \SetKwInOut{Output}{output}
    \Input{information matrix $M\in \mathbb{R}^{n \times n}$; \\
   number of samples $k$}
   \Output{Q-contexts matrix $\tilde R \in \mathbb{R}^{k \times n}$}
   /* Prepare the state: compute the cumulative sum of M*/ 
   $S(M)=(\|r_1\|^2 , \| r_1\|^2+\|r_2\|^2, \|r_1\|^2+\|r_2\|^2+\|r_3\|^2,\cdots, \|M\|_F^2)$\;
   \For{$i\leftarrow 1$ \KwTo $k$}{
        /* sample a row $i$ from the state of $M$ */ \\
        Generate $s\sim \mathcal{U}(0,\|M\|_F^2)$\;
        Search $i$ such that $S(M)_{i-1}\leq s<S(M)_i$\;
        /* normalization */ \\
        Form $\tilde R_{i,*} = \frac{\Vert M \Vert_F}{\sqrt{k}\|M_{r_i,*} \|}M_{r_i,*}$\;
   }
   Return $\tilde R$\;
\end{algorithm}

\textbf{Step 3: Calculating word embedding from Q-contexts.}
It is known \cite{Mnih} that embedding matrix
$E_w$ taken to be
the form $E_w=V_w\sqrt \Sigma$ can be beneficial in predictive performance,
where $V_w$ and $\Sigma$ are right singular vectors and singular values of information matrix $M$. In view of Theorem~\eqref{main-thm} later, with high probability $\tilde R^\top \tilde R\approx M^\top M$, 
which implies that $\tilde R\approx U\Sigma V_w^\top$ where $U$ is the left singular vectors of $\tilde R$. Since $U$ is a (partial-)isometry, we obtain the matrix form of Equation~\eqref{main-eq}:
\begin{equation}\label{main-eq-matrix}
    E_w= V_w\sqrt \Sigma\approx \tilde R^\top U\Sigma^{-1/2}= R^\top \Lambda,
\end{equation}
where $\Lambda = D^{-1}U\Sigma^{-1/2}$ is the matrix of context coefficients.

\textbf{Complexity Analysis.} 
We get Algorithm~\ref{main-algo} by putting the above procedures together. 
As for line 1, it requires $\mathcal O(\text{nnz}(X))$ time to perform point-by-point operations on the nonzero elements of the co-occurrence matrix $X$. 
As for line 2 and line 3, time complexity for state preparation is $\mathcal O(n)$ and for $\ell^2$-norm sampling is $\mathcal{O} (k \log n)$.
As for line 4, $\mathcal O(\text{poly}(k))$ time is spent in singular vectors computation. In MF method, it needs to take highly expensive $\mathcal O(\text{poly}(n))$ time to compute singular vectors of original information matrix $M$.
As for line 5, $\mathcal O(kdn)$ time is spent in matrix multiplication.

\begin{algorithm}[h]
    \SetAlgoLined
   \caption{$\ouralgo$ method}
   \label{main-algo}
    \LinesNumbered
    \SetKwInOut{Input}{input}
    \SetKwInOut{Output}{output}
    \Input{sparse co-occurrence matrix $X\in \mathbb{R}^{n \times n}$; \\
   number of samples $k$, and embedding dimension $d$}
   \Output{embedding matrix $E_w$}
   Compute information matrix $M$ (e.g. PPMI, SPPMI) from $X$ \;
   Construct the Q-contexts matrix $\tilde R$\ according to Algorithm~\ref{q-contexts-algo}\;
   (Optional) Construct the much smaller Q-contexts matrix $\tilde C^\top$\ from $\tilde R^\top$\ according to Algorithm~\ref{q-contexts-algo} again\;
   Compute the top $d$ left singular vectors $U$ and singular values $\Sigma$ for the Q-contexts matrix $\tilde R$\ or $\tilde C$\;
   Return $E_w=\tilde R^\top U \Sigma^{-1/2}$\;
\end{algorithm}

\section{Theoretical proof}
\label{proof-section}

In this section, we demonstrate that very few $\ell^2$-norm samplings on the information matrix $M$ suffice to extract high-quality word embeddings. 

The idea that word embedding hidden in mutual information matrix $M$ can be extracted from a few $\ell^2$-norm sampling on the state of $M$ might seem surprising at first glance. The root of this phenomenon stems from the fact that Q-context $r_c$ is a randomized object, its probabilistic behavior has an almost deterministic nature in the sense of Central Limit Theorem.

In more precise terms, word representation hides in the symmetric form $M^\top M.$
In view of the random nature of context $r_c$,
this form can be written as the expectation of the rank-one matrix $S=\frac{1}{p_c}r_c^\top r_c$ relating to context $c$,
\begin{equation}\label{sum-h}
    M^\top M=r_1^\top r_1+\cdots+r_n^\top r_n=\sum_{c} p_c\times \frac{1}{p_c}r_c^\top r_c,
\end{equation}
where $p_c$ is the probability of getting context $c$ from a measurement, and $c$ runs over the set of contexts.
Precisely, $S$ is the random matrix taking value $\frac{1}{p_c}r_c^\top r_c$ with probability $p_c$.

The idea of Central Limit Theorem and its general form of probabilistic measure concentration is also valid 
in a functional analytic (matrix) context, where a scalar-valued random variable is generalized to a matrix-valued one.
One of the most celebrated theorems is the operator/matrix Bernstein concentration inequality \cite{tropp}.

\begin{theorem}\label{berstein}
Let $S_i\in\R^{n\times n},i=1,\cdots,k$ be a sequence of independent identically distributed symmetric norm-bounded matrices with mean 
$\mu$.
Then for sufficiently small $\epsilon,$ we have
$$\P\left[\left\|\frac{1}{k}\sum S_i-\mu\right\|_{op}\geq \epsilon\right]\leq 4\cdot  \rank(\varsigma)\cdot     \exp{\left(-\frac{k\epsilon^2}{2\|\varsigma\|_{op}^2}\right)},$$
where $\varsigma^2$ is the 
covariance matrix of $S_1$: $\varsigma^2~=~\mathbb{E}(S_1-\mu)^2$, and $\rank(\cdot)=\frac{\|\cdot\|_F^2}{\|\cdot\|_{op}^2}$ denotes the stable rank of a matrix.
\end{theorem}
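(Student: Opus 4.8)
The plan is to recognize the statement as (essentially) the intrinsic-dimension version of the matrix Bernstein inequality \cite{tropp} and reduce to it. First I would center the summands: set $Y_i = S_i - \mu$, so the $Y_i$ are i.i.d., symmetric, mean-zero, and norm-bounded, say $\|Y_i\|_{op}\le L$ almost surely (this holds since $S_1$ is norm-bounded and $\mu=\mathbb E S_1$, so $\|\mu\|_{op}\le\sup\|S_1\|_{op}$). Put $Z=\sum_{i=1}^{k}Y_i$; the event of interest is $\{\|Z\|_{op}\ge k\epsilon\}$. Independence and mean-zero give $\mathbb E Z^{2}=\sum_{i=1}^{k}\mathbb E Y_i^{2}=k\varsigma^{2}$, so the matrix-variance parameter of $Z$ is $v:=\|\mathbb E Z^{2}\|_{op}=k\|\varsigma\|_{op}^{2}$, and, with $\varsigma\succeq0$ the PSD square root of $\mathbb E Y_1^{2}$, the intrinsic dimension of the variance matrix is $\mathrm{tr}(k\varsigma^{2})/\|k\varsigma^{2}\|_{op}=\mathrm{tr}(\varsigma^{2})/\|\varsigma^{2}\|_{op}=\|\varsigma\|_{F}^{2}/\|\varsigma\|_{op}^{2}=\rank(\varsigma)$. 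So the claim is exactly a matrix Bernstein tail bound for $Z$ at level $t=k\epsilon$.

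The engine is the matrix Laplace-transform method of \cite{tropp}. For $\theta>0$ one has $e^{\theta\lambda_{\max}(Z)}\le\mathrm{tr}\,e^{\theta Z}$, and Lieb's concavity theorem gives subadditivity of the matrix cumulant generating function,
\[
\mathbb E\,\mathrm{tr}\,e^{\theta Z}\;\le\;\mathrm{tr}\,\exp\!\Big(\sum_{i=1}^{k}\log\mathbb E\,e^{\theta Y_i}\Big).
\]
Norm-boundedness of the $Y_i$ yields the per-summand semidefinite bound $\log\mathbb E\,e^{\theta Y_i}\preceq g(\theta)\,\mathbb E Y_i^{2}$ with $g(\theta)=(e^{\theta L}-\theta L-1)/L^{2}$ (the usual Bernstein-type bound on the matrix moment generating function of a bounded mean-zero matrix), hence $\mathbb E\,\mathrm{tr}\,e^{\theta Z}\le\mathrm{tr}\,\exp\!\big(k\,g(\theta)\,\varsigma^{2}\big)$. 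The naive next step — bounding the right-hand side by $n\cdot e^{kg(\theta)\|\varsigma\|_{op}^{2}}$ — would leave the ambient dimension $n$; instead I would invoke the intrinsic-dimension refinement of the Laplace bound from \cite{tropp}, which replaces the exponential by an auxiliary function vanishing at $0$ so that the ambient contribution disappears and $n$ is replaced by $\rank(\varsigma)$ (the underlying fact: for a convex $f$ with $f(0)=0$ and PSD $V$, $\mathrm{tr}\,f(V)\le\mathrm{intdim}(V)\,f(\|V\|_{op})$, since $f(\lambda_i)/\lambda_i$ is increasing).

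Optimizing over $\theta\in(0,3/L)$ is then the scalar Bernstein computation: using $g(\theta)\le\frac{\theta^{2}/2}{1-\theta L/3}$ and choosing $\theta\asymp t/(v+Lt/3)$ with $t=k\epsilon$, $v=k\|\varsigma\|_{op}^{2}$ gives
\[
\P\big[\lambda_{\max}(Z)\ge k\epsilon\big]\;\le\;4\,\rank(\varsigma)\,\exp\!\Big(\frac{-k\epsilon^{2}/2}{\|\varsigma\|_{op}^{2}+L\epsilon/3}\Big),
\]
valid once $k\epsilon$ clears the threshold $\sqrt{v}+L/3$ built into the intrinsic-dimension bound, i.e. for $\epsilon\gtrsim\|\varsigma\|_{op}/\sqrt{k}$. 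Applying the same bound to $-Z$ controls $\|\tfrac1k\sum_i S_i-\mu\|_{op}=\|Z\|_{op}/k$; the constant $4$ and this two-sided bookkeeping are standard (see \cite{tropp}). Finally, for $\epsilon$ small enough that $L\epsilon/3$ is dominated by $\|\varsigma\|_{op}^{2}$, the denominator may be replaced by $\|\varsigma\|_{op}^{2}$, producing the stated exponent $-k\epsilon^{2}/(2\|\varsigma\|_{op}^{2})$.

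The one genuinely delicate point, I expect, is the intrinsic-dimension step: the bare Laplace-transform argument unavoidably produces the ambient dimension $n$, and obtaining $\rank(\varsigma)$ in its place needs the more careful auxiliary-function argument of \cite{tropp} together with the mild restriction on the size of $\epsilon$ — which is precisely why the statement reads ``for sufficiently small $\epsilon$'' and why the cubic Bernstein correction $L\epsilon/3$ can be absorbed. Everything else (centering, Lieb's theorem, the scalar moment bound, the one-parameter optimization, and the passage from the two tails of $Z$ to $\|Z\|_{op}$) is routine; indeed, an entirely sufficient alternative is to quote the intrinsic-dimension matrix Bernstein inequality of \cite{tropp} verbatim and substitute $Y_i=S_i-\mu$, $v=k\|\varsigma\|_{op}^{2}$, $\mathrm{intdim}=\rank(\varsigma)$, $t=k\epsilon$.
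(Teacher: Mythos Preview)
Your proposal is correct, and in fact more detailed than the paper's own treatment: the paper does not prove this theorem at all but simply states it as the matrix Bernstein concentration inequality from \cite{tropp}. Your last sentence---quote the intrinsic-dimension matrix Bernstein inequality of \cite{tropp} and substitute $Y_i=S_i-\mu$, $v=k\|\varsigma\|_{op}^{2}$, $\mathrm{intdim}=\rank(\varsigma)$, $t=k\epsilon$---is precisely the paper's implicit stance, so your approach is fully aligned with it.
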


Thanks to the matrix concentration Theorem~\ref{berstein} , we see that, with high probability, the expectation $\mu =\mathbb{E} S=M^\top M$ can be well approximated by the average of 
$k~\geq~ \frac{3\|\varsigma\|_{op}^2}{\epsilon^2}\log \rank(\varsigma)$ samples $S_1,\cdots,S_k$.
The average of these samples is 
\begin{equation*}
    M^\top M\approx \frac{1}{k}\left(S_1+\cdots+S_k\right)=\tilde R^\top \tilde R,
\end{equation*}
where $\tilde R$ is the normalized version of Q-contexts matrix $R$: 
\begin{equation*}
    \tilde R=D^{-1}R, D=\text{diag}(\sqrt{kp_{i_1}},\cdots,\sqrt{kp_{i_k}}).
\end{equation*}
Precisely, we get the following theorem:
\begin{theorem}\label{main-thm}
Let $p_c=\frac{\|r_c\|^2}{\|M\|_F^2}$, then we have
$$\P\left[\left\|\tilde R^\top \tilde R-M^\top M\right\|_{op}\geq \epsilon\right]\leq 4\cdot \rank(\varsigma)\cdot \exp{\left(-\frac{k\epsilon^2 }{2\|\varsigma\|_{op}^2}\right)},$$
where $\varsigma = \sqrt{\mathbb{E} (S-\mu)^2}$ with stable rank $\rank(\varsigma)$ being bounded by the rank of $M$, and
\begin{equation}\label{sigma-inequality-theorem}
    \|\varsigma\|_{op}\leq\min\left(\|M\|_F\|M\|_{op}, \sqrt{\|M\|_F^4-\|M^\top M\|_F^2}\right).
\end{equation}
\end{theorem}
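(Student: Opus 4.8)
The plan is to read the statement off the matrix Bernstein inequality (Theorem~\ref{berstein}) applied to the i.i.d.\ sequence $S_1,\dots,S_k$, where $S_j=\frac{1}{p_{c_j}}r_{c_j}^\top r_{c_j}$ and $c_j$ is the $j$-th $\ell^2$-norm sampling outcome. First I would verify the hypotheses of Theorem~\ref{berstein}: each $S_j$ is symmetric (it is a scalar multiple of the rank-one matrix $r_{c_j}^\top r_{c_j}$); the $S_j$ are i.i.d.\ since the samples are drawn independently from the same distribution on contexts; and each is norm-bounded, because with $p_c=\|r_c\|^2/\|M\|_F^2$ we have $\|S_j\|_{op}=\|r_{c_j}\|^2/p_{c_j}=\|M\|_F^2$. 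The common mean is $\mu=\mathbb{E}S=\sum_c p_c\cdot\frac{1}{p_c}r_c^\top r_c=\sum_c r_c^\top r_c=M^\top M$ by Equation~\eqref{sum-h}. I would then identify the empirical average with $\tilde R^\top\tilde R$: since $\tilde R=D^{-1}R$ with $D=\text{diag}(\sqrt{kp_{c_1}},\dots,\sqrt{kp_{c_k}})$, we get $\tilde R^\top\tilde R=R^\top D^{-2}R=\frac1k\sum_j\frac{1}{p_{c_j}}r_{c_j}^\top r_{c_j}=\frac1k\sum_j S_j$. Feeding this into Theorem~\ref{berstein} reproduces the claimed tail bound verbatim, with $\varsigma^2=\mathbb{E}(S-\mu)^2$.

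It then remains to control $\rank(\varsigma)$ and $\|\varsigma\|_{op}$, and the one computation that matters is the second moment of $S$. Using $(r_c^\top r_c)^2=\|r_c\|^2\,r_c^\top r_c$, one gets $\mathbb{E}S^2=\sum_c\frac{\|r_c\|^2}{p_c}r_c^\top r_c=\|M\|_F^2\sum_c r_c^\top r_c=\|M\|_F^2\,M^\top M$, where the choice of $p_c$ is again what makes $\|r_c\|^2/p_c$ a constant. Since $\mu$ is deterministic and symmetric, $\varsigma^2=\mathbb{E}(S-\mu)^2=\mathbb{E}S^2-\mu^2=\|M\|_F^2\,M^\top M-(M^\top M)^2\succeq 0$; its range lies in the row space of $M$, so the ordinary rank of $\varsigma$ (equal to that of $\varsigma^2$) is at most $\operatorname{rank}(M)$, and the stable rank $\rank(\varsigma)$ is no larger.

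For the operator-norm bound I would invoke two elementary facts about PSD matrices. First, $0\preceq\mu^2\preceq\mathbb{E}S^2$ together with $\varsigma^2\succeq0$ gives $\|\varsigma\|_{op}^2=\|\varsigma^2\|_{op}\le\|\mathbb{E}S^2\|_{op}=\|M\|_F^2\,\|M^\top M\|_{op}=\|M\|_F^2\,\|M\|_{op}^2$, hence the first term in~\eqref{sigma-inequality-theorem}. Second, for $A\succeq0$ one has $\|A\|_{op}\le\operatorname{tr}(A)$, so $\|\varsigma^2\|_{op}\le\operatorname{tr}(\varsigma^2)=\operatorname{tr}(\mathbb{E}S^2)-\operatorname{tr}(\mu^2)=\|M\|_F^2\operatorname{tr}(M^\top M)-\|M^\top M\|_F^2=\|M\|_F^4-\|M^\top M\|_F^2$, which yields the second term; taking the minimum closes the proof.

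I do not expect a serious mathematical obstacle here: the argument is a direct instantiation of Theorem~\ref{berstein} plus linear algebra. The two places that need care are (i) the bookkeeping that makes $\frac1k\sum_j S_j=\tilde R^\top\tilde R$ an exact identity rather than an approximation, and (ii) consistently exploiting the specific $\ell^2$-norm sampling weights $p_c=\|r_c\|^2/\|M\|_F^2$ so that the cancellations in both $\mathbb{E}S$ and $\mathbb{E}S^2$ go through cleanly; with a different sampling distribution the clean formula $\mathbb{E}S^2=\|M\|_F^2\,M^\top M$ would fail and~\eqref{sigma-inequality-theorem} would not hold in this form.
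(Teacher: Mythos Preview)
Your proposal is correct and follows essentially the same route as the paper: invoke Theorem~\ref{berstein} after identifying $\frac{1}{k}\sum_j S_j=\tilde R^\top\tilde R$ and $\mu=M^\top M$, then compute $\varsigma^2=\|M\|_F^2 M^\top M-(M^\top M)^2$ to read off both operator-norm bounds and the rank bound. The only cosmetic difference is that the paper inserts a Lagrange-multiplier argument to explain that the given $p_c$ minimizes $\|\varsigma\|_F^2=\operatorname{tr}(\varsigma^2)$ over all sampling distributions, whereas you simply plug in the stated $p_c$ and compute; the resulting inequalities are identical.
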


\begin{proof}
The probability inequality in the theorem is a direct consequence of Theorem~1. We only need to prove   inequality~\eqref{sigma-inequality-theorem} and the bound on $\rank(\varsigma)$ in the theorem.

By direct calculation, we have
$$\varsigma^2 = \mathbb{E} S^2-\mu^2=\sum_c \frac{1}{p_c}\|r_c\|^2r_c^\top r_c - (M^\top M)^2.$$

Therefore,
\begin{align*}
    \|\varsigma\|_F^2&=Tr(\varsigma^2)=\sum_c\frac{\|r_c\|^4}{p_c}-\|M^\top M\|^2_F\\
    \sum_c p_c&=1.
\end{align*}
The Lagrange multiplier Theorem implies that $\|\varsigma\|_F^2$ achieves minimum only when $p_c^2$ is proportional to $\|r_c\|^4,$ namely
$p_c=\frac{\|r_c\|^2}{\|M\|_F^2}$. In this case, 
$$\|\varsigma\|_{op}^2\leq\|\varsigma\|_F^2=\|M\|_F^4-\|M^\top M\|^2_F$$
which is the first part of inequality~\eqref{sigma-inequality-theorem}.

For the second part, when $p_c=\frac{\|r_c\|^2}{\|M\|_F^2}$ we see that as matrices
\begin{equation}\label{inequality-sig-sq}
    \varsigma^2\leq \sum \frac{1}{p_c}\|r_c\|^2r_c^\top r_c=\|M\|_F^2M^\top M.
\end{equation}
Inequality~\eqref{inequality-sig-sq} implies
$$\|\varsigma\|_{op}\leq \|M\|_F\|M\|_{op}.$$

The stable rank of $\varsigma$ is bounded as a consequence of inequality~\eqref{inequality-sig-sq}:
\begin{align*}
   \rank{(\varsigma)} &\leq rank(\varsigma)=rank(\varsigma^2)\\
   &\leq rank (M^\top M)=rank(M).
\end{align*}




\end{proof}

From the proof of Theorem~\ref{main-thm} (especially the proof of the first part of inequality~\eqref{sigma-inequality-theorem}), 
we can see that the choice of $p_c$ is motivated by the variance minimization scheme. Therefore such a sampling strategy can lead to the matrix mean at a faster speed.

\subsection{Error Analysis}\label{error-analysis}
We remark that that approximation in Theorem~\ref{main-thm} is of high quality, both in theory and practice.

From a mathematical point of view, approximation error in Theorem~\ref{main-thm} is small provided that 
$k~\geq~ \frac{3\|\varsigma\|_{op }^2}{\epsilon^2}\log \rank(\varsigma)$ rows are sampled from the information matrix $M$. Our analysis is tighter than \cite{fkv} since we make use of a stronger bound on matrix concentration (Theorem~\ref{berstein}) concerning stable rank and spectral norm of a matrix, instead of dimension and Frobenius norm.
Stable rank is upper bounded by the rank of a matrix, which is more effective in a low-rank matrix regime such as word embedding problem. 
Spectral norm is bounded above by Frobenius norm as well. Moreover, the Frobenius norm can fail to capture the genuine behavior of random matrices in even very simple examples \cite{tropp}. Therefore, our analysis shows that with very few samples of rows, $\tilde R^\top  \tilde R$
can be a good approximation of $M^\top M$ with negligible error.

These theoretical observations are also consistent with our experiments. Indeed, in Section~\ref{ablation-section}, we will see that with information matrix dimensions varying from 50k to 400k, the number of row samples required to achieve a good approximation is relatively stable --- in our test it ranges from 40k to 50k, while less than only $2\%$ of accuracy is lost compared to direct calculation with the original information matrix.

\section{Experiments}\label{experiments-section}
In this section, we evaluate the proposed $\ouralgo$ method on three different evaluation tasks: Word Similarity, Text Classification and Named Entity Recognition (NER), which have been commonly used to evaluate previous word embedding methods~\citep{zhang2019learning, NIPS2014_feab05aa}. We introduce our training corpora and baselines in Section~\ref{data-section}. We describe the details of implementation and evaluation tasks in Section~\ref{imp.-section} and Section~\ref{eva-section}. We report experimental results and ablation study in Section~\ref{results-section} and Section~\ref{ablation-section}, respectively.

\begin{table}[htbp]
\caption{Data statistics of enwik9, WebBase and CC.
`\#tokens' indicates the number of total tokens in the whole corpora.
`\#words' indicates the number of unique words.
`\#pairs' indicates the number of word-context pairs.
}
\centering
\setlength{\tabcolsep}{1.8mm}{
\begin{tabular}{l|r|r|r}
\toprule
        & enwik9                & WebBase                & CC                \\ \midrule
\#tokens & 124,301,826            & 2,976,897,565          & 6,065,531,635          \\ 
\#words & 833,184                & 3,107,950              & 10,558,748             \\ 
\#pairs & 42,234,884             & 930,896,198            & 1,861,679,208          \\ \bottomrule
\end{tabular}
}
\label{tab:corpora_stat}
\end{table}

\subsection{Training Corpora and Baselines}
\label{data-section}

\textbf{Training corpora.} We conduct our experiments on three large scale English corpora: \textit{enwik9} with about 0.1 billion tokens, \textit{WebBase}~\cite{han-etal-2013-umbc} with about 3 billion tokens, and the filtering of 
\textit{Common Crawl corpus (CC})~\cite{ortiz-suarez-etal-2020-monolingual} with about 6 billion tokens. We first lowercase text and then remove the noisy text like HTML span.
We mainly implement data pre-processing on the basis of the script\footnote{\url{http://mattmahoney.net/dc/textdata.html}} used in word2vec.
After pre-processed, the vocabulary sizes of three corpora are 56,466 (on enwik9), 277,704 (on WebBase) and 400,000 (on CC) respectively. We prepare co-occurrence matrices with a context window size of 10. The co-occurrence matrices are square symmetric matrices whose row sizes are equal to vocabulary sizes. 
Our training corpora (enwik9\footnote{\url{http://mattmahoney.net/dc/enwik9.zip}},WebBase\footnote{\url{http://ebiquity.umbc.edu/redirect/to/resource/id/351/UMBC-webbase-corpus}},CC\footnote{\url{https://oscar-public.huma-num.fr/shuff-dedup/en/}}) are publicly available.
Note that the original CC is super-scale corpora, which obtains over 418B tokens. To facilitate experiments, we take 6B tokens.
The statistics of the three raw corpora are listed in Table~\ref{tab:corpora_stat}.

\textbf{Baselines.} In our experiments, we mainly compare our method {$\ouralgo$} with  popular benchmarks listed below.
\begin{itemize}
    \item \textbf{MF}~\citep{NIPS2014_feab05aa} is a global matrix factorization method that uncovers the semantic information implied in the matrix, such as PPMI and SPPMI matrices.
    \item \textbf{word2vec}\footnote{\url{https://code.google.com/archive/p/word2vec/source/default/source}}~\citep{DBLP:conf/nips/MikolovSCCD13} is a two-layer neural network that is trained to reconstruct  linguistic contexts of words. We choose the SGNS architecture in our experiments.
    \item \textbf{GloVe}\footnote{\url{https://github.com/stanfordnlp/GloVe}\label{glove}}~\citep{DBLP:conf/emnlp/PenningtonSM14} is a typical co-occurrence count based neural method, which captures global information from word co-occurrence matrix in the training corpus.
    \item  \textbf{fasttext}\footnote{\url{https://github.com/facebookresearch/fastText}}~\citep{fasttext} is a character-based method which represents each word as an n-gram of characters.
\end{itemize}

\subsection{Implementation Details}
\label{imp.-section}
In our experiments, we use code\footnote{\url{https://github.com/stanfordnlp/GloVe/blob/master/src/cooccur.c}} from GloVe to obtain co-occurrence matrix.
For 
MF and {\ouralgo}, matrices are stored in a list of lists (LIL) sparse matrix format to allow efficient row operations.
The shift parameter of SPPMI is set to $\negs$=5 on all training corpora.
Sample size $k$ for our method $\ouralgo$ are chosen from [10,~100000] as shown in Fig.~\ref{fig:diff_sample_all}. We factorize the Q-contexts matrix $\tilde C$\ to get singular vectors and singular values.
According to the official guide, we use 15 iterations to train GloVe and word2vec methods and 5 iterations to train fasttext method.
The number of dimensions of embedding is set to 300.
Note that we set the dimension of word embedding to sample size $k$ when $k$ is smaller than 300 in ablation study of sample size.
All experiments are carried out on a cloud server with Intel(R) Xeon(R) Gold 6231C CPU. We set the number of CPU cores to 20, 20, 10, 1 and 1 for fasttext, word2vec, GloVe, MF and $\ouralgo$, respectively.

\begin{table*}[ht]
\centering
\caption{The performance of {\ouralgo} and all baselines on word similarity, text classification and NER tasks.
$k$ = the sample size in \ouralgo. By the 'random' method, we  randomly initialize the embedding of each word. $\text{GloVe}^\dagger$ is publicly released word embedding pre-trained with 6B tokens. 
\textbf{Bold scores} are best within groups of baselines and {\ouralgo}.
}
\vskip 0.15in
\setlength{\tabcolsep}{1.95mm}{
\begin{tabular}{c|c|r|ll|ll|ll} 
\toprule
\multirow{2}{*}{Dataset} & \multirow{2}{*}{Method} & \multicolumn{1}{c|}{\multirow{2}{*}{Time~/~mins ${\downarrow}$}} & \multicolumn{2}{c|}{Word similarity ${\uparrow}$}                                            & \multicolumn{2}{c|}{Text classification ${\uparrow}$}           & \multicolumn{2}{c}{NER $\uparrow$}     \\ 
\cmidrule{4-9}
                         &                        & \multicolumn{1}{c|}{(Cc: CPU core)}                      & \multicolumn{1}{c}{MEN} &\multicolumn{1}{c|}{WS353}  &  \multicolumn{1}{c}{SST-2} & \multicolumn{1}{c|}{5AG} & \multicolumn{1}{c}{Conll}  & \multicolumn{1}{c}{BTC} \\ 
\midrule
\multirow{6}{*}{\shortstack{enwik9\\(0.1B)}} 
     &MF-PPMI              & 27.4 $\times$ 1 Cc                                            & \textbf{74.24} & \textbf{70.31}                      & 78.61 & 86.63                          & 87.72 &  71.06 \\
     & $\ouralgo$-PPMI ($k$=40000)       & 14.9  $\times$ 1 Cc                                           & 73.19$\pm$0.28 & 69.23$\pm$0.41                      & 78.73$\pm$0.57 & \textbf{87.25}$\pm$0.29                          & \textbf{87.74}$\pm$0.11 & \textbf{71.16}$\pm$0.49 \\
     &MF-SPPMI              & 13.2 $\times$ 1 Cc                                            & 72.26 & 67.14                      & 78.54 & 86.19                          & 86.13 &  69.88  \\
     &$\ouralgo$-SPPMI ($k$=40000)       & \textbf{3.9  $\times$ 1 Cc}                                            & 71.92$\pm$0.58 & 66.52$\pm$1.60                      & 78.15$\pm$0.20 & 87.18$\pm$0.30                          & 86.34$\pm$0.12 &  69.70$\pm$0.12 \\ 
& GloVe                  & 44.6 $\times$ 10 Cc                                  & 69.45 & 66.31                      & 79.26 & 86.51                          & 86.78 & 71.00  \\
                         & word2vec                   & 59.9  $\times$ 20 Cc                                & 74.09 & 68.96                      & 79.13 & 85.62                          & 86.51 &  70.19  \\
                         & fasttext                   & 67.3 $\times$ 20 Cc                                & 73.13 & 65.79                      & \textbf{80.59} & 86.85                          & 87.63 &  70.65  \\

\midrule
\multirow{6}{*}{\shortstack{WebBase\\(3B)}} 
 & MF-PPMI               & 185.4 $\times$ 1 Cc                                           & \textbf{77.97}         & \textbf{68.77}                           & 80.94                   & 86.32                   & 88.21            &        71.31   \\
 & $\ouralgo$-PPMI ($k$=50000)                 & 28.4 $\times$ 1 Cc                                            & 77.01$\pm$0.44                  & 68.74$\pm$0.84                        & 81.59$\pm$0.14                   & 87.17$\pm$0.22                   & 88.48$\pm$0.17         &       71.37$\pm$0.49       \\ 
 & MF-SPPMI               & 67.6 $\times$ 1 Cc                   & 76.84                  & 67.62                                        & 79.13                   & 85.34                   & 87.07            &     69.31      \\
 & $\ouralgo$-SPPMI ($k$=50000)                 & \textbf{7.3 $\times$ 1 Cc}                              & 76.31$\pm$0.55                  & 67.66$\pm$0.55                                  & 78.98$\pm$0.28                   & 86.86$\pm$0.23                   & 87.68$\pm$0.12         &       70.44$\pm$0.24       \\
& GloVe                  & 182.7×10 Cc                                      & 75.20 & 63.43                      & \textbf{82.66} & 86.90                           & 87.78 &  71.02  \\
                         & word2vec                   &     1211.4 $\times$ 20 Cc                                             &  74.73      & 63.98                           & 80.93 & 87.64                          & 88.58 & 69.89  \\
& fasttext                   & 1843.7  $\times$ 20 Cc                                & 73.10 & 58.80                      & 81.07 & \textbf{87.74}                          & \textbf{88.93} &  \textbf{72.83}  \\
\midrule
\multirow{6}{*}{\shortstack{CC\\(6B)}}     
 &MF-PPMI              & 406.5 $\times$ 1 Cc                                           & 77.72 & \textbf{70.87}                      & \textbf{83.99} & 86.85                          & 87.48  &  72.55  \\
 &$\ouralgo$-PPMI ($k$=50000)       & 31.4 $\times$ 1 Cc                                             & 76.39$\pm$0.54 & 69.54$\pm$0.54                      & 82.90$\pm$0.74 & \textbf{87.69}$\pm$0.15                          & 88.59$\pm$0.25 & 72.36$\pm$0.36  \\
 &MF-SPPMI              & 153.9  $\times$ 1 Cc                                          & 75.30 & 69.35                      & 80.25 & 85.92                          & 87.95 & 70.37  \\
 &$\ouralgo$-SPPMI ($k$=50000)       & \textbf{13.3 $\times$ 1 Cc}                                          & 73.92$\pm$0.35 & 66.15$\pm$0.96                      & 80.79$\pm$0.66 & 86.36$\pm$0.17                          & 87.78$\pm$0.18  & 70.86$\pm$0.56  \\
& GloVe                  & 270.5 $\times$ 10 Cc                                             & 76.37      &  65.41                           &  81.70      & 86.47                               & 88.56  &   71.23  \\
& word2vec                   & 2369.7  $\times$ 20 Cc                               & 79.29 & 69.90                      & 81.57 & 87.14                          & 88.65  & 72.93 \\
& fasttext                   & 3419.4  $\times$ 20 Cc                                & \textbf{79.64} & 68.97                      & 82.70 & 87.53                          & \textbf{89.46} &  \textbf{73.92}  \\
\midrule
\multirow{2}{*}{}      & random                  &             \multicolumn{1}{c|}{/}                                & 0.31      &  2.38                            & 73.82                   & 82.51                   & 81.63      & 59.22 \\
                         & $\text{GloVe}^\dagger$                   &              \multicolumn{1}{c|}{/}                  & 73.75 & 57.26                      & 82.79 & 87.34                          & 91.15   &  72.26  \\

\bottomrule
\end{tabular}
 \label{tab:results}}
\end{table*}

\subsection{Evaluation Tasks}
\label{eva-section}
To measure the quality of embeddings, we conduct experiments on three different evaluation tasks: Word Similarity, Text Classification and Named Entity Recognition~(NER).

\textbf{Word similarity.}
The word similarity task is to measure how well the semantic relationship between words is captured by word embeddings.
Experiments are conducted on 
MEN~\citep{bruni2014multimodal} and WS353~\citep{agirre-etal-2009-study} datasets. 
To estimate the quality of word embeddings, we compute the Spearman Rank Correlation score between the word embedding cosine similarities and human-annotated scores.

\textbf{Text classification.}
In the text classification task, the classifier predicts predefined labels for the given texts.
We conduct experiments on two datasets, i.e., 5AbstractsGroup (5AG)~\cite{liu-etal-2018-task} and Stanford Sentiment Treebank (SST-2)~\cite{SST2}.We choose TextCNN~\cite{kim-2014-convolutional} as our classifier, and measure the performance by weighted F1 metrics. 
The sizes of the kernels are 2, 3 and 4 respectively. 
We use the Adam optimizer with a mini-batch size of 128 and learning rate of 0.001.

\textbf{Named entity recognition.}
NER is the task of identifying and categorizing the entities in the given text. We conduct experiments on two benchmark datasets: Conll~\cite{tjong-kim-sang-de-meulder-2003-introduction} and BTC~\cite{derczynski2016broad}. 
We choose wordLSTM+charCNN+CRF~\cite{ma-hovy-2016-end} as our NER model and measure the performance by entity-level F1 metrics.
The wordLSTM is built upon a one-layer bidirectional LSTM structure with 200 hidden size. We employ the SGD algorithm to optimize model parameters on mini-batches of size 100, with a learning rate of 0.001.

\subsection{Results}
\label{results-section}

Table~\ref{tab:results} illustrates the results on various evaluation tasks and training times of all word embedding methods. It can be observed that our method $\ouralgo$ outperforms all baselines in three corpora on training time (and resources) metric while the performance metrics are fairly close.

In terms of evaluation tasks, we notice that MF and $\ouralgo$ are superior to GloVe, word2vec and fasttext in most word similarity tasks, whereas performing  slightly worse on text classification and NER tasks. 
For both NLP tasks (Text classification and NER), models using pre-trained word embeddings have remarkable improvement compared to the one using random embeddings. 
Comparing with MF, results obtained with $\ouralgo$ are fairly close to the ones obtained with MF and sometimes even better, despite the fact we only use a fraction of the original information matrix. 

We also observe that the released GloVe embedding is about 3\% better than word embeddings trained by us on Conll task. 
One possible reason is the mismatch of vocabulary~\cite{ma-hovy-2016-end}. We reuse the data pre-process script of word2vec excluding punctuations and digits. And the punctuations and digits are important in the Conll task.

To verify the stability of our method {\ouralgo}, we run 5 times for each embedding. We only compute the mean and variance of {\ouralgo} because other methods need a long time to train word embedding. The small variance shown in Table \ref{tab:results} indicate that the word embedding obtained by {\ouralgo} are stable.

With regard to training time, MF method innately has an advantage over neural network based methods (word2vec, GloVe and fasttext) given the same computational settings. Our method $\ouralgo$ improves significantly on top of that. As can be seen from the Table~\ref{tab:results}, our method $\ouralgo$-PPMI only requires 54.38\% (on enwik9), 15.32\% (on WebBase) and 7.72\% (on CC) of the training time of MF-PPMI respectively. With the CC corpus, while GloVe needs 270.5 minutes with 10 CPU cores to obtain word embeddings, it takes only 31.4 minutes with 1 CPU core with $\ouralgo$-PPMI. Although, as the vocabulary size increases, the training time of $\ouralgo$ increases accordingly, the time taken by MF increases by a significantly larger margin. 

Note that in the case closest to practice (6B tokens and 400K vocabularies), we achieve at least $11\sim 13$ times speed-up compared with other baselines. 
In more realistic scenarios where the number of tokens is trillions and the size of vocabulary is several million, one can infer from the growing trend that \emph{the speed advantage of our algorithm could potentially reach several orders of magnitude.}

The analysis presented above confirms that 
$\ouralgo$~method can substantially minimize the training time while maintaining benchmark performance. 

\begin{figure*}[htbp]
    \centering
    \centerline{\includegraphics[scale=0.50]{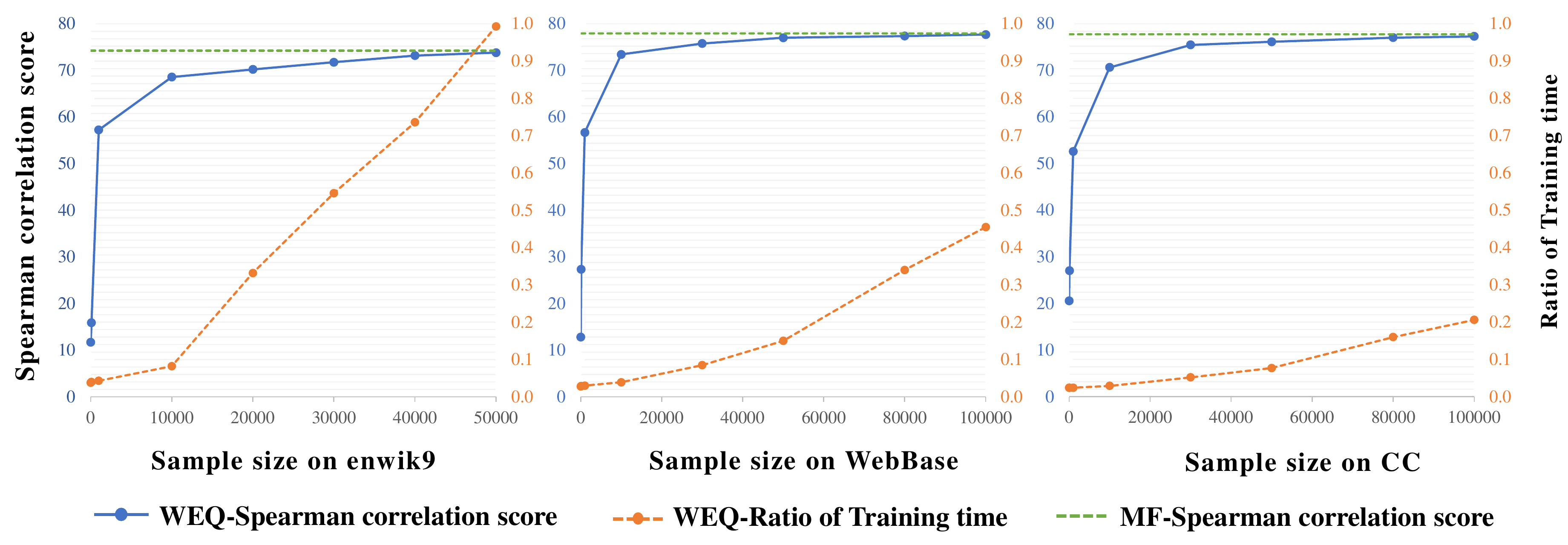}}
    \caption{The scores of similar task (MEN) and the corresponding training times of {\ouralgo}-PPMI method under different sample sizes. The original dimensions of the information  matrices are about 56K (on enwik9), 278K (on WebBase) and 400K (on CC) respectively. Ratio of Training time = training time of \ouralgo~/ training time of MF.}
    \label{fig:diff_sample_all}
\end{figure*}

\subsection{Discussion and Analysis}
\label{ablation-section}

\begin{table}[htbp]
\caption{The training time decomposition of the different part of {$\ouralgo$} methods on the CC corpus. Total training time of {$\ouralgo$}-PPMI($k$=50000) and {$\ouralgo$}-SPPMI($k$=50000) are 31.4 minutes and 13.3 minutes respectively.}
\resizebox{0.48\textwidth}{!}{
\scriptsize
\begin{tabular}{@{}l@{}ccccc@{}}
\toprule
          & \multicolumn{1}{l@{}}{\begin{tabular}[c]{@{}l@{}}Info. matrix\\ computation\end{tabular}} & \multicolumn{1}{l@{}}{\begin{tabular}[c]{@{}l@{}}State\\ preparation\end{tabular}} & \multicolumn{1}{l@{}}{Sampling} & \multicolumn{1}{l@{}}{\begin{tabular}[c]{@{}l@{}}Sing. val.\\ vec.\end{tabular}} & \multicolumn{1}{l@{}}{\begin{tabular}[c]{@{}l@{}}Embedding\\ calculation\end{tabular}}   \\
\midrule
\ouralgo-PPMI  & 26.37\%                                                                                & 3.74\%                                                                          & 6.93\%                       & 46.40\%                                                                        & 16.56\%                                                                             \\
\ouralgo-SPPMI & 52.93\%                                                                                & 3.76\%                                                                          & 3.73\%                       & 30.92\%                                                                        & 8.66\%                                                                              \\
\bottomrule
\end{tabular}
}
\label{tab:time_dis}
\end{table}

\textbf{Time analysis.} 
Recall three main steps of the \ouralgo~method:
information matrix computation, Q-contexts construction (including state preparation and sampling) and calculating word embedding from Q-contexts (including computation of singular values and left singular vectors, and embedding computation). The breakdown of computational time is displayed in Table.\ref{tab:time_dis}. 
Note that the Q-contexts construction step takes up only no more than 11\% of the whole time, 
which shows the efficiency of Q-contexts construction step.

\textbf{Sample size.} In our method, Q-contexts matrix is obtained by sampling from the information matrix.  
To investigate how the sample size affects the performance of $\ouralgo$, we carry out experiments with different sample sizes on all three corpora as shown in Fig.\ref{fig:diff_sample_all}. We present a representative benchmark score, similarity task MEN, against the training times under corresponding sample sizes. The results of other evaluation tasks are shown in Fig.\ref{fig:sample_all_evaluation}.
As the sample size increases, it is obvious that the performance of $\ouralgo$ on MEN approaches that of MF. The convergence is fairly fast, especially when the corpus size is large. We can see that to reach at least 98\% of the performance of MF, the sample size of $\ouralgo$ method only needs to be 40k (on enwik9), 50k (on WebBase), 50k (on CC), which are about 70.83\%, 18.00\%, 12.50\% of the original 
size of the information matrices, respectively. This validates the theoretical observations in Section~\ref{error-analysis}.
As the size of the original information matrix increases, the percentage of samples needed decreases. In terms of training time, it rises slowly at small sample sizes then trends upward sharply at larger sample sizes. This indicates that the training time can be effectively reduced when sampling a small matrix to get word embedding.

\begin{figure}[htbp]
    \centering
    \centerline{\includegraphics[scale=0.36]{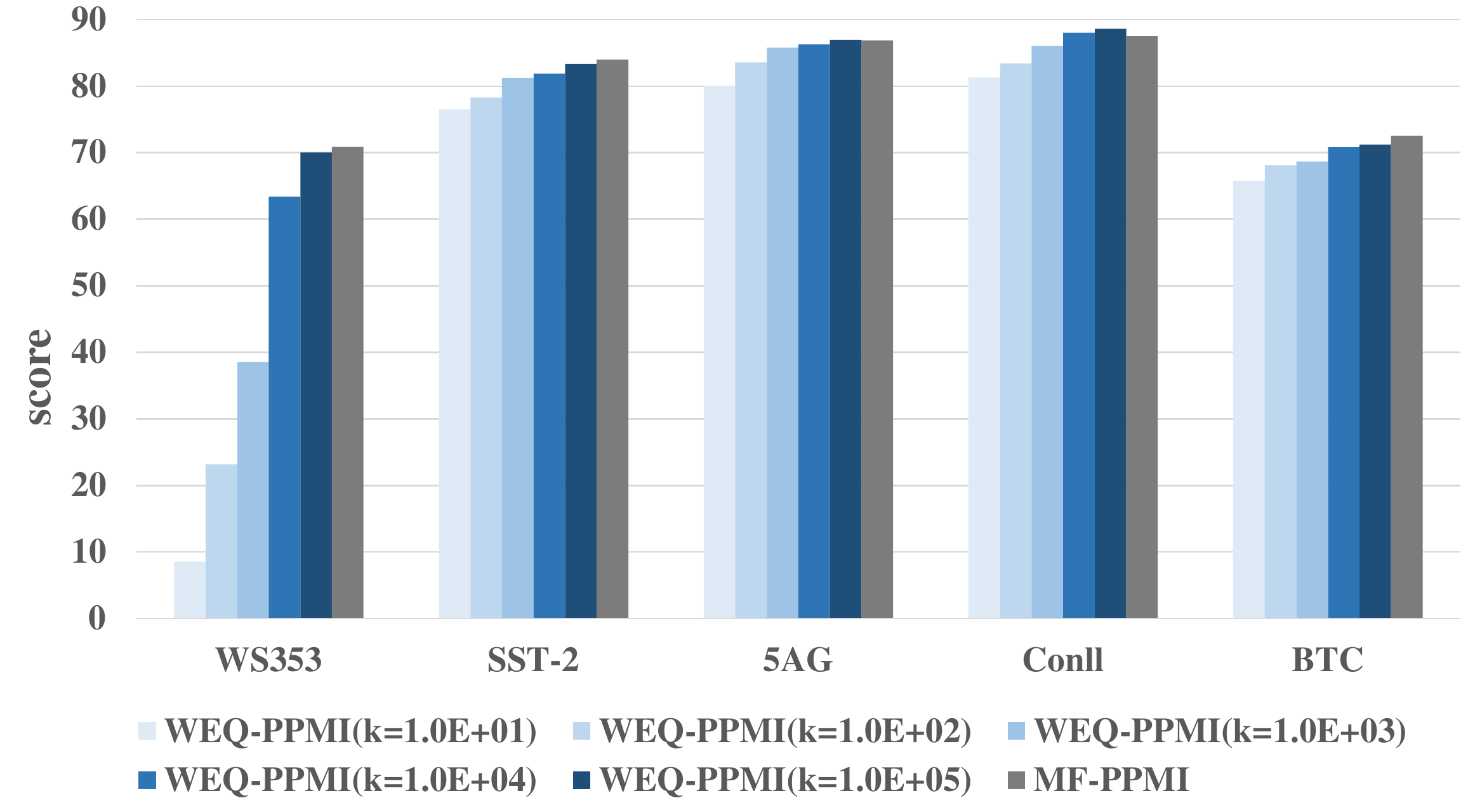}}
    \caption{The scores of all evaluation tasks except MEN under different sample sizes on CC corpus.}
    \label{fig:sample_all_evaluation}
\end{figure}

\textbf{$\ell^2$-norm sampling vs uniform sampling.} In $\ouralgo$ method, we use $\ell^2$-norm sampling to capture the typical information of origin information matrix. To verify the effectiveness of the $\ell^2$-norm sampling, we replace the $\ell^2$-norm sampling with uniform sampling in the Q-contexts construction step. Uniform sampling means that each row of information matrix $M$ will be sampled with equal probability. We compare the performance of word embeddings trained by these two different sampling algorithms. The evaluation results are shown in Fig.~\ref{fig:uniform}. $\ell^2$-norm sampling has great advantages over uniform sampling in terms of word similarity scores. In terms of downstream tasks (text classification, NER), $\ell^2$-norm sampling has $1.5\% \sim 4\%$ increase compared with uniform sampling. These experimental results are consistent with our theoretical proof in Section~\ref{proof-section}.

\begin{figure}[htbp]
    \begin{center}
    \includegraphics[scale=0.36]{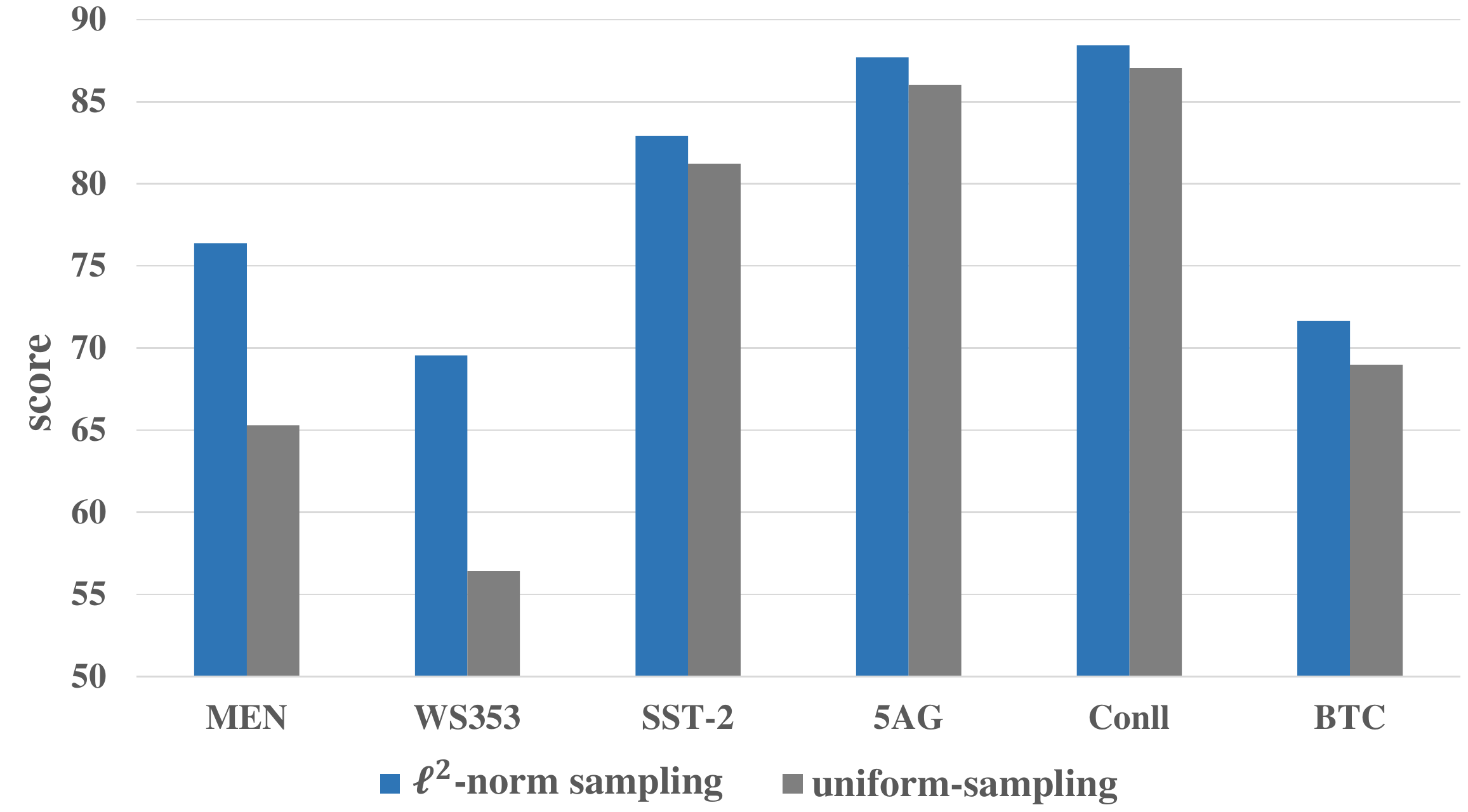}
    \caption{Comparison between $\ell^2$-norm sampling and uniform sampling.}
    \label{fig:uniform}
    \end{center}
\end{figure}

\textbf{Nonzeros/Sparsity analysis.}
To investigate the correlation between matrix sparsity and algorithm efficiency, we show a comparison between time and 
number of nonzeros in an information matrix $M$ in Fig.~\ref{fig:nnz}.
By $\text{TES}$, we mean total time excluding SVD computation, which corresponds to Algorithm~3 excluding line 3.
Since the time for SVD computation depends on the SVD engine used in our method, we do not provide a detailed analysis.
The Pearson Correlation Score between $\text{nnz}(M)$ and $\text{TES}$ is 0.9583.
It demonstrates that our algorithm efficiency is approximately linearly and positively correlated with the number of nonzero elements of $M$.
We also observe that number of nonzero elements in SPPMI matrix is significantly less than that in PPMI. That is because only the positive values higher than $\kappa$ are retained after the shifted operation. The decrease of number of nonzero elements leads to a more sparse matrix and shorter $\text{TES}$.

\begin{figure}[htbp]
    \begin{center}
    \includegraphics[scale=0.34]{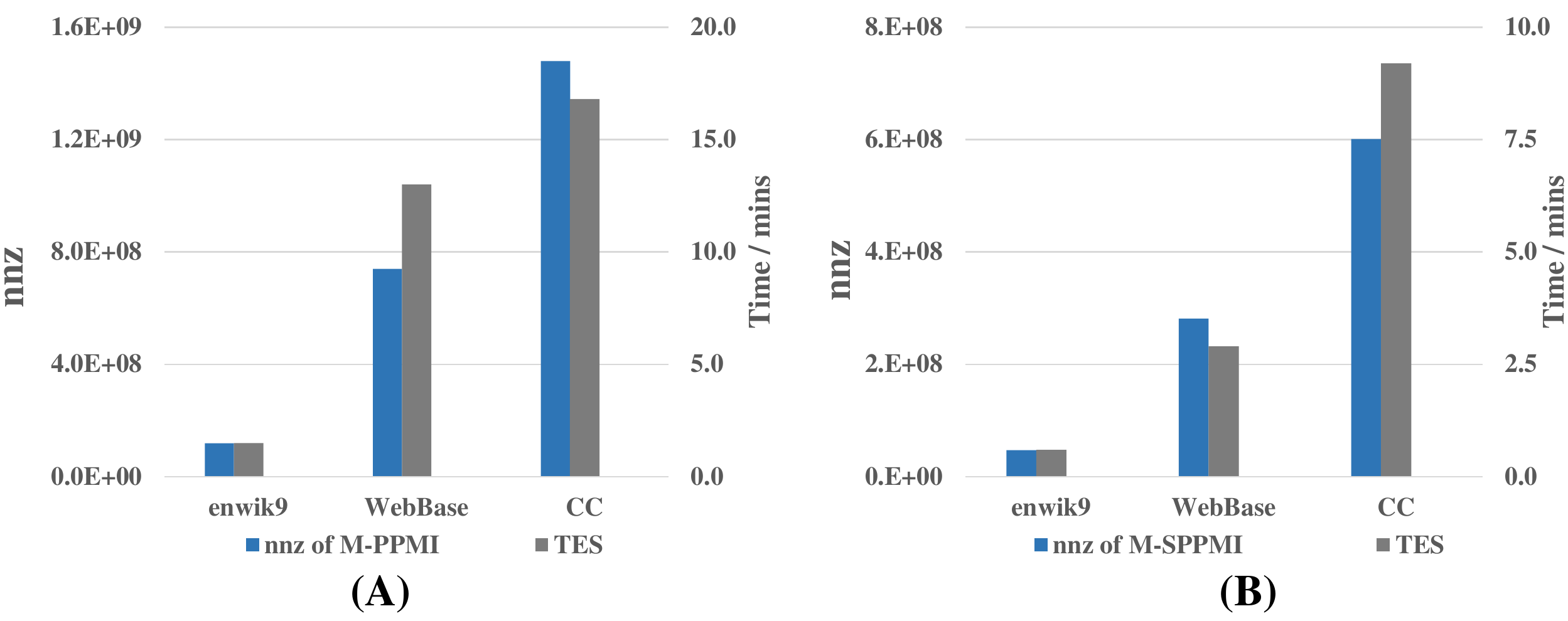}
    \caption{Comparison between $\text{nnz}(M)$ and $\text{TES}$ on three different training corpora. $\text{nnz}(M)$ means number of nonzero elements in information matrix $M$. $\text{TES}$ means  algorithm training Time Excluding SVD.
    The Pearson Correlation Score is 0.9583.
    Panel A is the result for the PPMI information matrix, and Panel B is the  result for the SPPMI matrix. }
    \label{fig:nnz}
    \end{center}
\end{figure}

\section{Related work}
\label{related word-section}

In this section, we review the related work of word embedding and low-rank approximation.

\subsection{Word Embedding}
Constructing the word embedding which could express the semantic features is a fundamental task in Natural Language Processing. It brings benefits to many NLP tasks~\citep{maruf-haffari-2018-document, khabiri2019industry}. 
Briefly, recent work about word embedding can be categorized into two genres, i.e., neural network based methods~\cite{DBLP:conf/nips/MikolovSCCD13, fasttext,DBLP:conf/emnlp/PenningtonSM14} and global matrix factorization based methods~\cite{deerwester1990indexing,NIPS2014_feab05aa,arora}. It is found that we can bridge these two categories by unifying neural network based methods into a matrix factorization framework~\citep{arora, NIPS2014_feab05aa}.

 Most of the neural network based methods model the relation between the target word and its contextual words. Among them, the SGNS (Skip-Gram with Negative Sampling) model is the popular implementation of word2vec~\cite{DBLP:conf/nips/MikolovSCCD13}. It separates local context windows on the whole corpus, and focuses on maximizing the likelihood of contextual words based on the given word. Fasttext~\cite{fasttext} is an extension of the word2vec model, which represents each word as an n-gram of characters by a sliding window. GloVe~\cite{DBLP:conf/emnlp/PenningtonSM14} is trained based on the global word-word co-occurrence counts from a corpus.
As for one kind of global matrix factorization based method, MF~\cite{NIPS2014_feab05aa} factorizes the SPPMI matrix explicitly, rather than iteratively tuning the network parameters.

More recently, ELMo~\cite{Elmo} and BERT~\cite{devlin2019bert} achieve state-of-the-art results in a variety of NLP tasks.
Despite the state-of-the-art results achieved by deep contextualization models, pre-trained word embedding should not be neglected. Indeed, ELMo is dependent on GloVe embedding as input during training~\cite{Elmo}. Moreover, both ELMo and BERT are extremely large deep networks that require huge computing resources. The pre-training of $\text{BERT}_{large}$ takes 4 days to complete on 64 TPUs~\citep{devlin2019bert}. And it is also difficult
to apply deep contextualization models directly on low-resource PCs or mobile phones~\cite{strubell-etal-2019-energy, sun-etal-2020-mobilebert}.

In the studies described above, researchers paid less attention to the computational cost of word embedding training. It is true that large-scale word embedding training requires significant computational costs~\cite{DBLP:conf/nips/MikolovSCCD13, Mnih, DBLP:conf/emnlp/PenningtonSM14, mikolov13}.
In this work, we aim to present a novel efficient method to obtain word embeddings of a large-scale vocabulary, while maintaining its superiority in terms of effectiveness.

\subsection{Low-Rank Approximation}
Low-rank approximation problems in mathematical modeling have been studied for decades~\citep{fkv,woodruff}, which is approximating a matrix by one whose rank is less than that of the original matrix. The aim is to obtain more compact information representation and less complex data modeling with limited losses. It arises in many applications such as recommendation system~\citep{xanadu, Ewin}. 
Unlike their implementations of random matrices and matrices of small sizes~\citep{xanadu}, our \ouralgo~investigates the actual large matrices associated with natural language and reveals new relations for word embeddings.
Our work is also different from \cite{Ewin, fkv} since we provide more detailed theoretical and empirical analysis on matrix concentration, and direct analysis of singular vectors.
We also have novel treatments for sparse matrices and better algorithmic designs for state preparation.

\section{Conclusion}
\label{sec:conclusion}
In this work, we introduce the notion of Q-contexts (matrix), which can be constructed efficiently.
These are only a small fraction (less than 12.5\% in the practical scenario) of all the contexts in the entire corpus.
We also present a novel relation between word vectors and Q-contexts, and provide a theoretical foundation and rigorous analysis.
Based on this relation, we design a novel and efficient \ouralgo~method for fast computation of large-scale word embedding. 
Empirical experiments show that our algorithm \ouralgo~method runs at least 
$11\sim 13$ times
faster than well-established matrix factorization methods.
Resource and time advantages over word2vec, GloVe and fasttext are even more pronounced in our empirical study. We have also shown that \ouralgo~enjoys decent accuracy performance in a variety of NLP tasks compared to the other methods tested in this study.
In the future, we would like to efficiently learn word embedding of million-level vocabulary. 
It is also an interesting topic to apply our method to other domains such as graph embedding.

\begin{acks}
This work was supported by National Natural Science Foundation of China (62076100), and Fundamental Research Funds for the Central Universities, SCUT (D2210010,D2200150,and D2201300), the Science and Technology Planning Project of Guangdong Province (2020B0101100002).
\end{acks}

\bibliographystyle{ACM-Reference-Format}
\balance
\bibliography{cikm}


\begin{thebibliography}{39}


\ifx \showCODEN    \undefined \def \showCODEN     #1{\unskip}     \fi
\ifx \showDOI      \undefined \def \showDOI       #1{#1}\fi
\ifx \showISBNx    \undefined \def \showISBNx     #1{\unskip}     \fi
\ifx \showISBNxiii \undefined \def \showISBNxiii  #1{\unskip}     \fi
\ifx \showISSN     \undefined \def \showISSN      #1{\unskip}     \fi
\ifx \showLCCN     \undefined \def \showLCCN      #1{\unskip}     \fi
\ifx \shownote     \undefined \def \shownote      #1{#1}          \fi
\ifx \showarticletitle \undefined \def \showarticletitle #1{#1}   \fi
\ifx \showURL      \undefined \def \showURL       {\relax}        \fi
\providecommand\bibfield[2]{#2}
\providecommand\bibinfo[2]{#2}
\providecommand\natexlab[1]{#1}
\providecommand\showeprint[2][]{arXiv:#2}

\bibitem[\protect\citeauthoryear{Agirre, Alfonseca, Hall, Kravalova,
  Pa{\c{s}}ca, and Soroa}{Agirre et~al\mbox{.}}{2009}]%
        {agirre-etal-2009-study}
\bibfield{author}{\bibinfo{person}{Eneko Agirre}, \bibinfo{person}{Enrique
  Alfonseca}, \bibinfo{person}{Keith Hall}, \bibinfo{person}{Jana Kravalova},
  \bibinfo{person}{Marius Pa{\c{s}}ca}, {and} \bibinfo{person}{Aitor Soroa}.}
  \bibinfo{year}{2009}\natexlab{}.
\newblock \showarticletitle{A Study on Similarity and Relatedness Using
  Distributional and {W}ord{N}et-based Approaches}. In
  \bibinfo{booktitle}{\emph{NAACL '09}}.
\newblock


\bibitem[\protect\citeauthoryear{Arora, Li, Liang, Ma, and Risteski}{Arora
  et~al\mbox{.}}{2016}]%
        {arora}
\bibfield{author}{\bibinfo{person}{Sanjeev Arora}, \bibinfo{person}{Yuanzhi
  Li}, \bibinfo{person}{Yingyu Liang}, \bibinfo{person}{Tengyu Ma}, {and}
  \bibinfo{person}{Andrej Risteski}.} \bibinfo{year}{2016}\natexlab{}.
\newblock \showarticletitle{A latent variable model approach to pmi-based word
  embeddings}.
\newblock \bibinfo{journal}{\emph{Transactions of the Association for
  Computational Linguistics}}  \bibinfo{volume}{4} (\bibinfo{year}{2016}),
  \bibinfo{pages}{385--399}.
\newblock


\bibitem[\protect\citeauthoryear{Arrazola, Delgado, Bardhan, and
  Lloyd}{Arrazola et~al\mbox{.}}{2020}]%
        {xanadu}
\bibfield{author}{\bibinfo{person}{Juan~Miguel Arrazola},
  \bibinfo{person}{Alain Delgado}, \bibinfo{person}{Bhaskar~Roy Bardhan}, {and}
  \bibinfo{person}{Seth Lloyd}.} \bibinfo{year}{2020}\natexlab{}.
\newblock \showarticletitle{Quantum-inspired algorithms in practice}.
\newblock \bibinfo{journal}{\emph{Quantum}} (\bibinfo{year}{2020}).
\newblock


\bibitem[\protect\citeauthoryear{Bojanowski, Grave, Joulin, and
  Mikolov}{Bojanowski et~al\mbox{.}}{2017}]%
        {fasttext}
\bibfield{author}{\bibinfo{person}{Piotr Bojanowski}, \bibinfo{person}{Edouard
  Grave}, \bibinfo{person}{Armand Joulin}, {and} \bibinfo{person}{Tomas
  Mikolov}.} \bibinfo{year}{2017}\natexlab{}.
\newblock \showarticletitle{Enriching word vectors with subword information}.
\newblock \bibinfo{journal}{\emph{Transactions of the Association for
  Computational Linguistics}}  \bibinfo{volume}{5} (\bibinfo{year}{2017}),
  \bibinfo{pages}{135--146}.
\newblock


\bibitem[\protect\citeauthoryear{Bruni, Tran, and Baroni}{Bruni
  et~al\mbox{.}}{2014}]%
        {bruni2014multimodal}
\bibfield{author}{\bibinfo{person}{Elia Bruni}, \bibinfo{person}{Nam-Khanh
  Tran}, {and} \bibinfo{person}{Marco Baroni}.}
  \bibinfo{year}{2014}\natexlab{}.
\newblock \showarticletitle{Multimodal distributional semantics}.
\newblock \bibinfo{journal}{\emph{Journal of artificial intelligence research}}
   \bibinfo{volume}{49} (\bibinfo{year}{2014}), \bibinfo{pages}{1--47}.
\newblock


\bibitem[\protect\citeauthoryear{Church and Hanks}{Church and Hanks}{1989}]%
        {church}
\bibfield{author}{\bibinfo{person}{Kenneth~Ward Church} {and}
  \bibinfo{person}{Patrick Hanks}.} \bibinfo{year}{1989}\natexlab{}.
\newblock \showarticletitle{Word Association Norms, Mutual Information, and
  Lexicography}. In \bibinfo{booktitle}{\emph{ACL '89}}.
\newblock


\bibitem[\protect\citeauthoryear{Coecke, de~Felice, Meichanetzidis, and
  Toumi}{Coecke et~al\mbox{.}}{2020}]%
        {coecke2020foundations}
\bibfield{author}{\bibinfo{person}{Bob Coecke}, \bibinfo{person}{Giovanni de
  Felice}, \bibinfo{person}{Konstantinos Meichanetzidis}, {and}
  \bibinfo{person}{Alexis Toumi}.} \bibinfo{year}{2020}\natexlab{}.
\newblock \showarticletitle{Foundations for Near-Term Quantum Natural Language
  Processing}.
\newblock \bibinfo{journal}{\emph{arXiv preprint arXiv:2012.03755}}
  (\bibinfo{year}{2020}).
\newblock


\bibitem[\protect\citeauthoryear{Dagan, Pereira, and Lee}{Dagan
  et~al\mbox{.}}{1994}]%
        {dagan}
\bibfield{author}{\bibinfo{person}{Ido Dagan}, \bibinfo{person}{Fernando
  Pereira}, {and} \bibinfo{person}{Lillian Lee}.}
  \bibinfo{year}{1994}\natexlab{}.
\newblock \showarticletitle{Similarity-Based Estimation of Word Cooccurrence
  Probabilities}. In \bibinfo{booktitle}{\emph{ACL '94}}.
\newblock


\bibitem[\protect\citeauthoryear{Dahiya, Konomis, and Woodruff}{Dahiya
  et~al\mbox{.}}{2018}]%
        {woodruff}
\bibfield{author}{\bibinfo{person}{Yogesh Dahiya}, \bibinfo{person}{Dimitris
  Konomis}, {and} \bibinfo{person}{David~P. Woodruff}.}
  \bibinfo{year}{2018}\natexlab{}.
\newblock \showarticletitle{An Empirical Evaluation of Sketching for Numerical
  Linear Algebra}. In \bibinfo{booktitle}{\emph{KDD '18}}.
\newblock


\bibitem[\protect\citeauthoryear{Deerwester, Dumais, Furnas, Landauer, and
  Harshman}{Deerwester et~al\mbox{.}}{1990}]%
        {deerwester1990indexing}
\bibfield{author}{\bibinfo{person}{Scott Deerwester}, \bibinfo{person}{Susan~T
  Dumais}, \bibinfo{person}{George~W Furnas}, \bibinfo{person}{Thomas~K
  Landauer}, {and} \bibinfo{person}{Richard Harshman}.}
  \bibinfo{year}{1990}\natexlab{}.
\newblock \showarticletitle{Indexing by latent semantic analysis}.
\newblock \bibinfo{journal}{\emph{Journal of the American society for
  information science}} \bibinfo{volume}{41}, \bibinfo{number}{6}
  (\bibinfo{year}{1990}), \bibinfo{pages}{391--407}.
\newblock


\bibitem[\protect\citeauthoryear{Derczynski, Bontcheva, and Roberts}{Derczynski
  et~al\mbox{.}}{2016}]%
        {derczynski2016broad}
\bibfield{author}{\bibinfo{person}{Leon Derczynski}, \bibinfo{person}{Kalina
  Bontcheva}, {and} \bibinfo{person}{Ian Roberts}.}
  \bibinfo{year}{2016}\natexlab{}.
\newblock \showarticletitle{Broad twitter corpus: A diverse named entity
  recognition resource}. In \bibinfo{booktitle}{\emph{Proceedings of COLING
  2016, the 26th International Conference on Computational Linguistics:
  Technical Papers}}. \bibinfo{pages}{1169--1179}.
\newblock


\bibitem[\protect\citeauthoryear{Devlin, Chang, Lee, and Toutanova}{Devlin
  et~al\mbox{.}}{2019}]%
        {devlin2019bert}
\bibfield{author}{\bibinfo{person}{Jacob Devlin}, \bibinfo{person}{Ming-Wei
  Chang}, \bibinfo{person}{Kenton Lee}, {and} \bibinfo{person}{Kristina
  Toutanova}.} \bibinfo{year}{2019}\natexlab{}.
\newblock \showarticletitle{BERT: Pre-training of Deep Bidirectional
  Transformers for Language Understanding}. In \bibinfo{booktitle}{\emph{NAACL
  '19}}. \bibinfo{pages}{4171--4186}.
\newblock


\bibitem[\protect\citeauthoryear{Drineas, Mahoney, and Muthukrishnan}{Drineas
  et~al\mbox{.}}{2008}]%
        {DBLP:journals/siammax/DrineasMM08}
\bibfield{author}{\bibinfo{person}{Petros Drineas}, \bibinfo{person}{Michael~W.
  Mahoney}, {and} \bibinfo{person}{S. Muthukrishnan}.}
  \bibinfo{year}{2008}\natexlab{}.
\newblock \showarticletitle{Relative-Error {CUR} Matrix Decompositions}.
\newblock \bibinfo{journal}{\emph{{SIAM} J. Matrix Anal. Appl.}}
  \bibinfo{volume}{30}, \bibinfo{number}{2} (\bibinfo{year}{2008}),
  \bibinfo{pages}{844--881}.
\newblock
\urldef\tempurl%
\url{https://doi.org/10.1137/07070471X}
\showDOI{\tempurl}


\bibitem[\protect\citeauthoryear{Frieze, Kannan, and Vempala}{Frieze
  et~al\mbox{.}}{2004}]%
        {fkv}
\bibfield{author}{\bibinfo{person}{Alan Frieze}, \bibinfo{person}{Ravi Kannan},
  {and} \bibinfo{person}{Santosh Vempala}.} \bibinfo{year}{2004}\natexlab{}.
\newblock \showarticletitle{Fast Monte-Carlo Algorithms for Finding Low-Rank
  Approximations}.
\newblock \bibinfo{journal}{\emph{J. ACM}} (\bibinfo{year}{2004}).
\newblock


\bibitem[\protect\citeauthoryear{Han, L.~Kashyap, Finin, Mayfield, and
  Weese}{Han et~al\mbox{.}}{2013}]%
        {han-etal-2013-umbc}
\bibfield{author}{\bibinfo{person}{Lushan Han}, \bibinfo{person}{Abhay
  L.~Kashyap}, \bibinfo{person}{Tim Finin}, \bibinfo{person}{James Mayfield},
  {and} \bibinfo{person}{Jonathan Weese}.} \bibinfo{year}{2013}\natexlab{}.
\newblock \showarticletitle{{UMBC}{\_}{EBIQUITY}-{CORE}: Semantic Textual
  Similarity Systems}. In \bibinfo{booktitle}{\emph{Second Joint Conference on
  Lexical and Computational Semantics (*{SEM}), Volume 1: Proceedings of the
  Main Conference and the Shared Task: Semantic Textual Similarity}}.
\newblock


\bibitem[\protect\citeauthoryear{Hazan, Koren, and Srebro}{Hazan
  et~al\mbox{.}}{2011}]%
        {DBLP:conf/nips/HazanKS11}
\bibfield{author}{\bibinfo{person}{Elad Hazan}, \bibinfo{person}{Tomer Koren},
  {and} \bibinfo{person}{Nati Srebro}.} \bibinfo{year}{2011}\natexlab{}.
\newblock \showarticletitle{Beating {SGD:} Learning SVMs in Sublinear Time}. In
  \bibinfo{booktitle}{\emph{Advances in Neural Information Processing Systems
  24: 25th Annual Conference on Neural Information Processing Systems 2011.
  Proceedings of a meeting held 12-14 December 2011, Granada, Spain}},
  \bibfield{editor}{\bibinfo{person}{John Shawe{-}Taylor},
  \bibinfo{person}{Richard~S. Zemel}, \bibinfo{person}{Peter~L. Bartlett},
  \bibinfo{person}{Fernando C.~N. Pereira}, {and} \bibinfo{person}{Kilian~Q.
  Weinberger}} (Eds.). \bibinfo{pages}{1233--1241}.
\newblock


\bibitem[\protect\citeauthoryear{Khabiri, Gifford, Vinzamuri, Patel, and
  Mazzoleni}{Khabiri et~al\mbox{.}}{2019}]%
        {khabiri2019industry}
\bibfield{author}{\bibinfo{person}{Elham Khabiri}, \bibinfo{person}{Wesley~M
  Gifford}, \bibinfo{person}{Bhanukiran Vinzamuri}, \bibinfo{person}{Dhaval
  Patel}, {and} \bibinfo{person}{Pietro Mazzoleni}.}
  \bibinfo{year}{2019}\natexlab{}.
\newblock \showarticletitle{Industry specific word embedding and its
  application in log classification}. In \bibinfo{booktitle}{\emph{CIKM '19}}.
  \bibinfo{pages}{2713--2721}.
\newblock


\bibitem[\protect\citeauthoryear{Kim}{Kim}{2014}]%
        {kim-2014-convolutional}
\bibfield{author}{\bibinfo{person}{Yoon Kim}.} \bibinfo{year}{2014}\natexlab{}.
\newblock \showarticletitle{Convolutional Neural Networks for Sentence
  Classification}. In \bibinfo{booktitle}{\emph{EMNLP '14}}.
\newblock


\bibitem[\protect\citeauthoryear{Levy and Goldberg}{Levy and Goldberg}{2014}]%
        {NIPS2014_feab05aa}
\bibfield{author}{\bibinfo{person}{Omer Levy} {and} \bibinfo{person}{Yoav
  Goldberg}.} \bibinfo{year}{2014}\natexlab{}.
\newblock \showarticletitle{Neural Word Embedding as Implicit Matrix
  Factorization}. In \bibinfo{booktitle}{\emph{NeurIPS '14}}.
\newblock


\bibitem[\protect\citeauthoryear{Liu, Huang, Gao, Wei, Tian, and Liu}{Liu
  et~al\mbox{.}}{2018}]%
        {liu-etal-2018-task}
\bibfield{author}{\bibinfo{person}{Qian Liu}, \bibinfo{person}{Heyan Huang},
  \bibinfo{person}{Yang Gao}, \bibinfo{person}{Xiaochi Wei},
  \bibinfo{person}{Yuxin Tian}, {and} \bibinfo{person}{Luyang Liu}.}
  \bibinfo{year}{2018}\natexlab{}.
\newblock \showarticletitle{Task-oriented Word Embedding for Text
  Classification}. In \bibinfo{booktitle}{\emph{Proceedings of the 27th
  International Conference on Computational Linguistics}}.
  \bibinfo{publisher}{Association for Computational Linguistics},
  \bibinfo{address}{Santa Fe, New Mexico, USA}, \bibinfo{pages}{2023--2032}.
\newblock


\bibitem[\protect\citeauthoryear{Ma and Hovy}{Ma and Hovy}{2016}]%
        {ma-hovy-2016-end}
\bibfield{author}{\bibinfo{person}{Xuezhe Ma} {and} \bibinfo{person}{Eduard
  Hovy}.} \bibinfo{year}{2016}\natexlab{}.
\newblock \showarticletitle{End-to-end Sequence Labeling via Bi-directional
  {LSTM}-{CNN}s-{CRF}}. In \bibinfo{booktitle}{\emph{ACL '16}}.
\newblock


\bibitem[\protect\citeauthoryear{Maruf and Haffari}{Maruf and Haffari}{2018}]%
        {maruf-haffari-2018-document}
\bibfield{author}{\bibinfo{person}{Sameen Maruf} {and}
  \bibinfo{person}{Gholamreza Haffari}.} \bibinfo{year}{2018}\natexlab{}.
\newblock \showarticletitle{Document Context Neural Machine Translation with
  Memory Networks}. In \bibinfo{booktitle}{\emph{Proceedings of the 56th Annual
  Meeting of the Association for Computational Linguistics (Volume 1: Long
  Papers)}}. \bibinfo{publisher}{Association for Computational Linguistics},
  \bibinfo{address}{Melbourne, Australia}, \bibinfo{pages}{1275--1284}.
\newblock


\bibitem[\protect\citeauthoryear{Mikolov, Chen, Corrado, and Dean}{Mikolov
  et~al\mbox{.}}{2013a}]%
        {mikolov13}
\bibfield{author}{\bibinfo{person}{Tomas Mikolov}, \bibinfo{person}{Kai Chen},
  \bibinfo{person}{Greg~S. Corrado}, {and} \bibinfo{person}{Jeffrey Dean}.}
  \bibinfo{year}{2013}\natexlab{a}.
\newblock \showarticletitle{Efficient Estimation of Word Representations in
  Vector Space}.
\newblock \bibinfo{journal}{\emph{ICLR Workshop’13}} (\bibinfo{year}{2013}).
\newblock


\bibitem[\protect\citeauthoryear{Mikolov, Sutskever, Chen, Corrado, and
  Dean}{Mikolov et~al\mbox{.}}{2013b}]%
        {DBLP:conf/nips/MikolovSCCD13}
\bibfield{author}{\bibinfo{person}{Tom{\'{a}}s Mikolov}, \bibinfo{person}{Ilya
  Sutskever}, \bibinfo{person}{Kai Chen}, \bibinfo{person}{Gregory~S. Corrado},
  {and} \bibinfo{person}{Jeffrey Dean}.} \bibinfo{year}{2013}\natexlab{b}.
\newblock \showarticletitle{Distributed Representations of Words and Phrases
  and their Compositionality}. In \bibinfo{booktitle}{\emph{NeurIPS '13}},
  \bibfield{editor}{\bibinfo{person}{Christopher J.~C. Burges},
  \bibinfo{person}{L{\'{e}}on Bottou}, \bibinfo{person}{Zoubin Ghahramani},
  {and} \bibinfo{person}{Kilian~Q. Weinberger}} (Eds.).
\newblock


\bibitem[\protect\citeauthoryear{Mnih and Kavukcuoglu}{Mnih and
  Kavukcuoglu}{2013}]%
        {Mnih}
\bibfield{author}{\bibinfo{person}{Andriy Mnih} {and} \bibinfo{person}{Koray
  Kavukcuoglu}.} \bibinfo{year}{2013}\natexlab{}.
\newblock \showarticletitle{Learning word embeddings efficiently with
  noise-contrastive estimation}. In \bibinfo{booktitle}{\emph{NeurIPS '13}},
  \bibfield{editor}{\bibinfo{person}{C.~J.~C. Burges},
  \bibinfo{person}{L.~Bottou}, \bibinfo{person}{M.~Welling},
  \bibinfo{person}{Z.~Ghahramani}, {and} \bibinfo{person}{K.~Q. Weinberger}}
  (Eds.).
\newblock


\bibitem[\protect\citeauthoryear{Ortiz~Su{\'a}rez, Romary, and
  Sagot}{Ortiz~Su{\'a}rez et~al\mbox{.}}{2020}]%
        {ortiz-suarez-etal-2020-monolingual}
\bibfield{author}{\bibinfo{person}{Pedro~Javier Ortiz~Su{\'a}rez},
  \bibinfo{person}{Laurent Romary}, {and} \bibinfo{person}{Beno{\^\i}t Sagot}.}
  \bibinfo{year}{2020}\natexlab{}.
\newblock \showarticletitle{A Monolingual Approach to Contextualized Word
  Embeddings for Mid-Resource Languages}. In \bibinfo{booktitle}{\emph{ACL
  '20}}.
\newblock


\bibitem[\protect\citeauthoryear{Pennington, Socher, and Manning}{Pennington
  et~al\mbox{.}}{2014}]%
        {DBLP:conf/emnlp/PenningtonSM14}
\bibfield{author}{\bibinfo{person}{Jeffrey Pennington},
  \bibinfo{person}{Richard Socher}, {and} \bibinfo{person}{Christopher~D
  Manning}.} \bibinfo{year}{2014}\natexlab{}.
\newblock \showarticletitle{Glove: Global vectors for word representation}. In
  \bibinfo{booktitle}{\emph{EMNLP '14}}. \bibinfo{pages}{1532--1543}.
\newblock


\bibitem[\protect\citeauthoryear{Peters, Neumann, Iyyer, Gardner, Clark, Lee,
  and Zettlemoyer}{Peters et~al\mbox{.}}{2018}]%
        {Elmo}
\bibfield{author}{\bibinfo{person}{Matthew Peters}, \bibinfo{person}{Mark
  Neumann}, \bibinfo{person}{Mohit Iyyer}, \bibinfo{person}{Matt Gardner},
  \bibinfo{person}{Christopher Clark}, \bibinfo{person}{Kenton Lee}, {and}
  \bibinfo{person}{Luke Zettlemoyer}.} \bibinfo{year}{2018}\natexlab{}.
\newblock \showarticletitle{Deep Contextualized Word Representations}. In
  \bibinfo{booktitle}{\emph{Proceedings of NAACL-HLT)}}.
  \bibinfo{publisher}{Association for Computational Linguistics},
  \bibinfo{address}{New Orleans, Louisiana}, \bibinfo{pages}{2227--2237}.
\newblock


\bibitem[\protect\citeauthoryear{Rebentrost, Mohseni, and Lloyd}{Rebentrost
  et~al\mbox{.}}{2014}]%
        {rebentrost2014quantum}
\bibfield{author}{\bibinfo{person}{Patrick Rebentrost}, \bibinfo{person}{Masoud
  Mohseni}, {and} \bibinfo{person}{Seth Lloyd}.}
  \bibinfo{year}{2014}\natexlab{}.
\newblock \showarticletitle{Quantum support vector machine for big data
  classification}.
\newblock \bibinfo{journal}{\emph{Physical review letters}}
  \bibinfo{volume}{113}, \bibinfo{number}{13} (\bibinfo{year}{2014}),
  \bibinfo{pages}{130503}.
\newblock


\bibitem[\protect\citeauthoryear{Socher, Perelygin, Wu, Chuang, Manning, Ng,
  and Potts}{Socher et~al\mbox{.}}{2013}]%
        {SST2}
\bibfield{author}{\bibinfo{person}{Richard Socher}, \bibinfo{person}{Alex
  Perelygin}, \bibinfo{person}{Jean Wu}, \bibinfo{person}{Jason Chuang},
  \bibinfo{person}{Christopher~D Manning}, \bibinfo{person}{Andrew~Y Ng}, {and}
  \bibinfo{person}{Christopher Potts}.} \bibinfo{year}{2013}\natexlab{}.
\newblock \showarticletitle{Recursive deep models for semantic compositionality
  over a sentiment treebank}. In \bibinfo{booktitle}{\emph{EMNLP '13}}.
  \bibinfo{pages}{1631--1642}.
\newblock


\bibitem[\protect\citeauthoryear{Song, Woodruff, and Zhang}{Song
  et~al\mbox{.}}{2016}]%
        {DBLP:conf/nips/SongWZ16}
\bibfield{author}{\bibinfo{person}{Zhao Song}, \bibinfo{person}{David~P.
  Woodruff}, {and} \bibinfo{person}{Huan Zhang}.}
  \bibinfo{year}{2016}\natexlab{}.
\newblock \showarticletitle{Sublinear Time Orthogonal Tensor Decomposition}. In
  \bibinfo{booktitle}{\emph{Advances in Neural Information Processing Systems
  29: Annual Conference on Neural Information Processing Systems 2016, December
  5-10, 2016, Barcelona, Spain}}, \bibfield{editor}{\bibinfo{person}{Daniel~D.
  Lee}, \bibinfo{person}{Masashi Sugiyama}, \bibinfo{person}{Ulrike von
  Luxburg}, \bibinfo{person}{Isabelle Guyon}, {and} \bibinfo{person}{Roman
  Garnett}} (Eds.). \bibinfo{pages}{793--801}.
\newblock


\bibitem[\protect\citeauthoryear{Strubell, Ganesh, and McCallum}{Strubell
  et~al\mbox{.}}{2019}]%
        {strubell-etal-2019-energy}
\bibfield{author}{\bibinfo{person}{Emma Strubell}, \bibinfo{person}{Ananya
  Ganesh}, {and} \bibinfo{person}{Andrew McCallum}.}
  \bibinfo{year}{2019}\natexlab{}.
\newblock \showarticletitle{Energy and Policy Considerations for Deep Learning
  in {NLP}}. In \bibinfo{booktitle}{\emph{ACL '19}}.
  \bibinfo{publisher}{Association for Computational Linguistics},
  \bibinfo{address}{Florence, Italy}, \bibinfo{pages}{3645--3650}.
\newblock


\bibitem[\protect\citeauthoryear{Sun, Yu, Song, Liu, Yang, and Zhou}{Sun
  et~al\mbox{.}}{2020}]%
        {sun-etal-2020-mobilebert}
\bibfield{author}{\bibinfo{person}{Zhiqing Sun}, \bibinfo{person}{Hongkun Yu},
  \bibinfo{person}{Xiaodan Song}, \bibinfo{person}{Renjie Liu},
  \bibinfo{person}{Yiming Yang}, {and} \bibinfo{person}{Denny Zhou}.}
  \bibinfo{year}{2020}\natexlab{}.
\newblock \showarticletitle{{M}obile{BERT}: a Compact Task-Agnostic {BERT} for
  Resource-Limited Devices}. In \bibinfo{booktitle}{\emph{ACL '20}}.
  \bibinfo{publisher}{Association for Computational Linguistics},
  \bibinfo{address}{Online}, \bibinfo{pages}{2158--2170}.
\newblock


\bibitem[\protect\citeauthoryear{Tang}{Tang}{2019}]%
        {Ewin}
\bibfield{author}{\bibinfo{person}{Ewin Tang}.}
  \bibinfo{year}{2019}\natexlab{}.
\newblock \showarticletitle{A quantum-inspired classical algorithm for
  recommendation systems}.
\newblock \bibinfo{journal}{\emph{STOC '19}} (\bibinfo{year}{2019}).
\newblock


\bibitem[\protect\citeauthoryear{Tjong Kim~Sang and De~Meulder}{Tjong Kim~Sang
  and De~Meulder}{2003}]%
        {tjong-kim-sang-de-meulder-2003-introduction}
\bibfield{author}{\bibinfo{person}{Erik~F. Tjong Kim~Sang} {and}
  \bibinfo{person}{Fien De~Meulder}.} \bibinfo{year}{2003}\natexlab{}.
\newblock \showarticletitle{Introduction to the {C}o{NLL}-2003 Shared Task:
  Language-Independent Named Entity Recognition}. In
  \bibinfo{booktitle}{\emph{NAACL '03}}.
\newblock


\bibitem[\protect\citeauthoryear{Tropp}{Tropp}{2015}]%
        {tropp}
\bibfield{author}{\bibinfo{person}{Joel~A. Tropp}.}
  \bibinfo{year}{2015}\natexlab{}.
\newblock \showarticletitle{An Introduction to Matrix Concentration
  Inequalities}.
\newblock  (\bibinfo{year}{2015}).
\newblock


\bibitem[\protect\citeauthoryear{Turney and Pantel}{Turney and Pantel}{2010}]%
        {turney}
\bibfield{author}{\bibinfo{person}{Peter~D. Turney} {and}
  \bibinfo{person}{Patrick Pantel}.} \bibinfo{year}{2010}\natexlab{}.
\newblock \showarticletitle{From Frequency to Meaning: Vector Space Models of
  Semantics}.
\newblock \bibinfo{journal}{\emph{J. Artif. Int. Res.}} (\bibinfo{year}{2010}).
\newblock


\bibitem[\protect\citeauthoryear{Zeng and Coecke}{Zeng and Coecke}{2016}]%
        {zeng2016quantum}
\bibfield{author}{\bibinfo{person}{William Zeng} {and} \bibinfo{person}{Bob
  Coecke}.} \bibinfo{year}{2016}\natexlab{}.
\newblock \showarticletitle{Quantum algorithms for compositional natural
  language processing}.
\newblock \bibinfo{journal}{\emph{arXiv preprint arXiv:1608.01406}}
  (\bibinfo{year}{2016}).
\newblock


\bibitem[\protect\citeauthoryear{Zhang, Liu, Zhu, Zheng, Liu, Wang, Chen, and
  Zhai}{Zhang et~al\mbox{.}}{2019}]%
        {zhang2019learning}
\bibfield{author}{\bibinfo{person}{Yun Zhang}, \bibinfo{person}{Yongguo Liu},
  \bibinfo{person}{Jiajing Zhu}, \bibinfo{person}{Ziqiang Zheng},
  \bibinfo{person}{Xiaofeng Liu}, \bibinfo{person}{Weiguang Wang},
  \bibinfo{person}{Zijie Chen}, {and} \bibinfo{person}{Shuangqing Zhai}.}
  \bibinfo{year}{2019}\natexlab{}.
\newblock \showarticletitle{Learning Chinese word embeddings from stroke,
  structure and pinyin of characters}. In \bibinfo{booktitle}{\emph{CIKM '19}}.
  \bibinfo{pages}{1011--1020}.
\newblock


\end{thebibliography}


\end{document}